\newcommand{\shortver}[1]{#1} % short version; comment out for long version
\crefname{equation}{equation}{Equations}
\newcommand{\disablecomments}{\Comments=0} % disable comments
\definecolor{darkgreen}{rgb}{0,0.5,0}
\definecolor{darkred}{rgb}{0.7,0,0}
\definecolor{purple}{rgb}{0.5,0,0.5}
\definecolor{blue}{rgb}{0,0,0.5}
\definecolor{orange}{rgb}{0.8, 0.3,0}
\newcommand{\kibitz}[2]{\ifnum\Comments=1\textcolor{#1}{#2}\fi}
\newcommand{\hideifanonymous}[1]{\ifnum\Anonymous=0#1\fi}
\newcommand{\showifanonymous}[1]{\ifnum\Anonymous=1#1\fi}
\newcommand{\R}{\mathds{R}}
\newcommand{\E}{\mathop{\mathds{E}}}
\newcommand{\Datasets}{\mathcal{D}}
\newcommand{\Obj}{L}
\newcommand{\Agg}{\mathcal{L}}
\newcommand{\ErrorRate}{\Obj_{\textrm{Err}}}
\newcommand{\MAE}{MAE}
\newcommand{\MAEloss}{\Obj_{\textrm{MAE}}}
\newcommand{\NLL}{\Obj_{\textrm{NLL}}} 
\newcommand{\CrossEntropy}{\Obj_{\textrm{CE}}} 
\newcommand{\KL}{\Obj_{\textrm{KL}}} 
\newcommand{\Brier}{\Obj_{\textrm{Brier}}} 
\newcommand{\SquaredLTwo}{\Obj_{\textrm{L2}}}
\newcites{Appx}{References}
\declaretheoremstyle[
    style=definition, 
    bodyfont=\normalfont\itshape,
    numberwithin=section
]{mystyle}
\declaretheoremstyle[
    headfont={\it}, 
    bodyfont=\normalfont,
]{myproofstyle}
\declaretheoremstyle[
    style=definition, 
    bodyfont=\normalfont\itshape,
]{myaxiomstyle}
\declaretheorem[style=mystyle]{theorem}
\declaretheorem[style=mystyle, sibling=theorem]{definition, example, lemma, proposition, remark, corollary, conjecture}
\declaretheorem[style=myaxiomstyle, unnumbered]{axiom}
\declaretheorem[name=Proof, style=myproofstyle, qed=$\qedsymbol$, unnumbered]{prf}
\declaretheoremstyle[
    style=definition, 
    bodyfont=\normalfont\itshape,
]{mystyle_nosectionnumbers}
\declaretheorem[name=Definition, style=mystyle_nosectionnumbers, unnumbered]{definition*}
\setlist[itemize]{leftmargin=1em, topsep=0pt, partopsep=0pt, parsep=0pt}
\newcolumntype{R}[2]{%
    >{\adjustbox{angle=#1,lap=\width-(#2)}\bgroup}%
    l%
    <{\egroup}%
}
\newcommand*\rot{\multicolumn{1}{R{35}{1em}}}% no optional argument here, please!
\newcommand{\yes }{{\bf \ding{51}}}
\newcommand{\no  }{--}
\providecommand{\shortver}[1]{}
\providecommand{\longver}[1]{#1}
    \newenvironment{kequation}{$}{$}
    \newenvironment{kequation*}{$}{$ }
    \newcommand{\minipar}[1]{\medskip\noindent\textbf{{#1.}}~}
    \newcommand{\superminipar}[1]{\smallskip\noindent\textbf{\emph{#1.}~~}}
    \newcommand{\micropar}[1]{\noindent\emph{#1.}~}
    \let\subsection\minipar
    \let\subsubsection\superminipar
    \let\paragraph\micropar
    \newenvironment{kequation*}{\begin{equation*}}{\end{equation*}}
\title{How to Evaluate Behavioral Models}
\author{
    Greg d'Eon\textsuperscript{\rm 1},
    Sophie Greenwood\textsuperscript{\rm 1,\rm 2}\thanks{Work done while at the University of British Columbia.},
    Kevin Leyton-Brown\textsuperscript{\rm 1},
    James R. Wright\textsuperscript{\rm 3}
}
\begin{document}

\maketitle

\begin{abstract}
Researchers building behavioral models, such as behavioral game theorists, use experimental data to evaluate predictive models of human behavior. 
However, there is little agreement about which loss function should be used in evaluations, with error rate, negative log-likelihood, cross-entropy, Brier score, and squared L2 error all being common choices. 
We attempt to offer a principled answer to the question of which loss functions should be used for this task, formalizing axioms that we argue loss functions should satisfy. 
We construct a family of loss functions, which we dub ``diagonal bounded Bregman divergences'', that satisfy all of these axioms. 
These rule out many loss functions used in practice, but notably include squared L2 error; we thus recommend its use for evaluating behavioral models.
\end{abstract}

\section{Introduction}
Theoretical models of decision-making are often poor descriptions of behavior in practice.
As a prime example, classic economic models such as Nash equilibrium fail to describe salient aspects of human behavior:
people often choose dominated actions~\citep{Goeree2001}
and fail to account for others' strategic decision making~\citep{Kneeland2015}.
In response to such failures, fields such as behavioral game theory aim to develop interpretable models that can predict human responses to strategic situations.
Such models are helpful to cognitive scientists, for learning how humans think when confronted with economic or strategic choices; to designers of economic systems, for tuning these systems to perform better in practice; and to designers of cooperative AI agents, for enabling these agents to effectively coordinate their behavior with humans~\cite{Hu2020, Carroll2019}.

% need losses
However, evaluating the quality of such a model on a dataset requires a loss function.
Researchers working in behavioral game theory have made a wide variety of different choices about precisely which loss function to use for such evaluations, with error rate, negative log-likelihood, cross-entropy, and (at least two notions of) mean-squared error all being common choices.
Clearly, the choice is a substantive one, as different losses will disagree about the quality of a prediction.
Which loss function should they use? 

In this paper, we attempt to answer this question with a first-principles argument.
Though we are motivated by behavioral game theory---and so it is the basis of our examples---
our argument depends only on four key characteristics of this field.
First, there is some mapping of interest from settings to \emph{distributions} over \emph{finite sets of discrete outcomes}
(e.g., the distribution of human decisions in strategic situations).
Second, it is possible to collect \emph{multiple samples} from this mapping for any given setting 
(e.g., by running an experiment with multiple participants).
Third, a researcher seeks a \emph{predictive model} of this mapping, which can predict the distribution of unseen data.
Fourth, this model must also be \emph{interpretable}, having few parameters whose values can be inspected and understood, and so it cannot generally represent the true mapping perfectly.
Our arguments can therefore be extended to other domains that share these characteristics;
we give several examples at the end of this paper.

% axioms
From these characteristics, we argue that loss functions should satisfy five key axioms.
The first two, which we call \textit{alignment} axioms, ensure that the loss function induces a correct preference ordering over predictions.
These axioms, \textit{sample Pareto-alignment} and \textit{distributional Pareto-alignment}, ensure that the loss function penalizes predictions that are clearly worse (on a given dataset or in expectation over realizations of this data, respectively).
The other three, \textit{interpretability} axioms, relate the numerical value of the loss to a prediction's quality.
\textit{Empirical distribution sufficiency} requires that the loss be invariant to the number or order of the observations; 
\textit{counterfactual Pareto-regularity} ensures that the loss appropriately respects changes in the data; 
and \textit{zero minimum} gives the loss an interpretable optimum.

% result
We show that it is possible to satisfy all of these axioms: we identify an entire family of loss functions that do so, which we dub ``diagonal bounded Bregman divergences''.
Exactly one widely used loss function, the squared L2 error between the predicted and empirical distributions, belongs to this set; we show how each of the other common loss functions violates at least one axiom.
In particular, the entire class of scoring rules,\footnote{
    The term ``scoring rule'' has multiple definitions in the literature. 
    We use a standard definition~\cite[e.g.,][]{Savage1971, Gneiting2007} that a scoring rule computes a loss separately for each observation, then takes the mean of these losses (\Cref{def:scoring-rule}).
    Other authors~\cite[e.g.,][]{Abernethy2012} use the term to refer to any arbitrary loss function. %, including all of the loss functions we discuss.
    Of course, our results on scoring rules only apply to the former, more restrictive definition.
}
a class of loss functions with celebrated alignment properties, all fail our interpretability axioms, making them suitable for training models but not evaluating them.

\longver{
    \subsection{Related Work}
    Before we begin, we review related work on loss functions for evaluating probabilistic predictions through the lenses of statistics, forecasting, and economics.
} 

\emph{The statistician's view: the likelihood principle.}
It might seem that the problem of choosing a loss function is a straightforward application of statistical inference: 
given a dataset and a model class that induces a set of probability distributions, we seek to understand how well each distribution describes the data.
Then, the standard statistics textbook argument is that we should use the likelihood of the data to evaluate each of these predicted distributions.
This argument is known as the ``likelihood principle'' \citep[e.g.,][]{Berger1988}: if the data was generated by one of the predicted distributions, then likelihood is a sufficient statistic for this distribution.
The catch is that this argument relies on the assumption that the model class is ``well-specified'', containing a model that outputs the true generating distribution.
This is not usually the case when evaluating interpretable models, which typically approximate behavior rather than to predict it perfectly.
We elaborate further on the problem of evaluating misspecified models when presenting our alignment axioms.

\emph{The forecaster's view: scoring rules.}
Another closely related problem is that of evaluating probabilistic forecasts of future events.
Work in this field generally uses \emph{scoring rules} \citep[e.g.,][]{Gneiting2007}, a class of loss functions that evaluate predictions independently on each observation.
Axiomatic characterizations from this literature agree that losses should be \emph{proper}---
the expected loss should be minimized by the true distribution, an axiom that we refer to in our analysis as ``distributionally proper''
---but diverge beyond this point:
negative log-likelihood is the only proper scoring rule that satisfies a locality axiom~\citep{McCarthy1956}, 
and two different neutrality axioms characterize Brier score~\citep{Selten1998} and the spherical score~\citep{Jose2009}.
Our work differs in that we propose axioms that address critical problems that arise when evaluating behavioral models, without being concerned that we are left with an entire class of loss functions.

Some authors have proposed stronger alternatives to propriety. 
Instead of simply requiring that the correct prediction minimize the expected loss, others have considered 
lower-bounding the loss of incorrect predictions~\cite{Friedman1983, Nau1985, Haghtalab2019},
maximizing the loss of a naive prediction~\cite{Li2022},
or ensuring that it also receives a lower loss in finite samples with high probability~\cite{Haghtalab2019}.
These axioms focus on identifying correct predictions, 
while we focus on comparing and evaluating incorrect predictions.

The field of property elicitation extends the definition of propriety in a different way, 
aiming to construct loss functions whose expectations are minimized at other summary statistics of a distribution;
propriety is the special case of eliciting the mean.
Of particular interest here is work on eliciting multiple properties~\cite{Lambert2008, Fissler2019},
as their ``accuracy rewarding'' and ``order sensitivity'' axioms are similar to our alignment axioms.
We discuss this relationship further in Section~\ref{sec:axioms}.

\textit{Evaluating model classes.}
Our axioms are concerned with evaluating individual predictions. 
\citet{Fudenberg2021} tackle the related problem of evaluating a {\it model class}, considering the cross-validation performance of a training algorithm that selects a model from this class.
They formalize a \emph{completeness} metric, which transforms an existing loss, giving a score of 100\% to an algorithm with the best possible cross-validation performance and 0\% to a baseline algorithm.
Their work complements ours: 
their completeness measure can be applied to any loss function, 
but they do not claim how this loss should behave on individual datasets. 
We thus recommend that researchers evaluating a model class should apply completeness to a loss that satisfies our alignment axioms.

\section{Setup and Existing Losses}
\label{sec:existing-losses}

We now give a formal description of the problem.
We start by making a simplification.
While researchers generally collect data and evaluate models on many different settings (e.g., games) at once, reporting a model's aggregate performance across these settings, we focus on evaluation in a \textit{single} setting.
However, this simplified analysis is useful: 
any loss that behaves appropriately on an arbitrary number of settings must behave appropriately in the special case of a single setting, so all of the loss functions we disqualify are also unsuitable for multiple settings.
We discuss the multiple-setting case in detail in Appendix~\ref{appendix:multiplegames}, where we provide straightforward extensions of our axioms and results.

We model a single scenario as follows.
Let $A = \{1, \ldots, d\}$ be a fixed set of choices available to the decision maker being modelled (e.g., actions available to experiment participants), and let $\Delta(A)$ be the set of distributions over these choices, i.e., the $(d-1)$-dimensional simplex.
We assume that there exists a fixed but unknown true distribution $p \in \Delta(A)$ of behavior, where the randomness in $p$ captures both differences between individuals and randomness in their behavior.
An analyst can collect a dataset consisting of $n$ independent, identical draws from $p$, which we denote $y \sim p^n$, representing actions taken by distinct actors (for example, different participants in a psychology experiment).
We denote the set of all such datasets by $\Datasets(A) = \bigcup_{n=1}^\infty A^n$.

The analyst is equipped with a model class, which induces a set of predicted distributions $\mathcal{F} \subseteq \Delta(A)$.
As this model class is interpretable (e.g., a parametric model with few parameters), this inequality can generally be strict, and $\mathcal{F}$ does not generally include the true distribution $p$.
Their goal is then to choose a model from this class that is good at predicting the distribution of behavior on unseen data.\footnote{We use the terms ``model'' and ``prediction'' interchangeably, as the model is only used to predict behavior in a single scenario.}
To make their choice, the analyst relies on a loss function $\Obj: \Delta(A) \times \Datasets(A) \to \R$ representing preferences over these predictions: that is, $\Obj(f, y) < \Obj(g, y)$ if and only if $f$ is a better description of the data than $g$.
Note that our analysis can easily be modified to handle objective functions that are expressed in a ``positive'' sense: for example, it is equivalent to maximize accuracy or minimize error rate.

We pause to define some additional notation.
For any dataset $y \in \Datasets(A)$, let $n(y)$ denote the number of observations in $y$ (or simply $n$, when $y$ is clear from context), and let $\bar p(y) \in \Delta(A)$ be its empirical distribution: that is, for all $a \in A$, \smash{$\bar p(y)_a = \nicefrac{\sum_{i=1}^{n(y)} \mathds{1}_{\{y_i = a\}}}{n(y)}$}.
Lastly, for any action $a \in A$, let $e_a \in \Delta(A)$ denote a point mass distribution on $a$. 

\longver{
    \subsection{Common Loss Functions}
}

While behavioral game theorists broadly take this approach of evaluating their models with \emph{some} loss function, they largely disagree about precisely \emph{which} loss function to use;
in fact, it is not uncommon for a single paper to use multiple different losses while analyzing different experiments.
To illustrate this disagreement, we give seven examples of losses that are common in the literature.

% family: error rate
First, one common choice is the \textbf{error rate}~\citep{Fudenberg2019, GarciaPola2020}.
It is especially common when $\mathcal{F}$ consists only of deterministic predictions, which assign probability to one action.
\begin{align*}
    \textstyle \ErrorRate(f, y) &= \textstyle\sum_{a=1}^d \bar p(y)_a (1 - f_a),
\end{align*}
It is similar to \textbf{mean absolute error} (\MAE)~\citep{Camerer2004, Levin2019}.
\begin{align*}
    % = \frac{1}{n} \sum_{i=1}^n (1 - f_{y_i}) 
    \textstyle \MAEloss(f, y) &= \|f - \bar p(y)\|_1 = \textstyle \sum_{a=1}^d |f_a - \bar p(y)_a|.
\end{align*}
These two losses are attractive because of their clearly defined scale, with a loss of 0 being achieved by a prediction that never makes mistakes (error rate) or matches the data perfectly (\MAE), and a maximum loss of 1 or 2, respectively, by a prediction that is never correct.

% Family: likelihood
Next, several common losses are based on the likelihood of the data, given the prediction.
Perhaps the most common choice of loss in all of behavioral game theory is \textbf{negative log-likelihood} (NLL)~\citep{McKelvey1992,Stahl1995,Wright2017}.
\begin{align*}
    \textstyle \NLL(f, y) 
    % = -\sum_{i=1}^n \log(f_{y_i}) 
    = -n \sum_{a=1}^d \bar p(y)_a \log(f_a).
\end{align*}
\textbf{Cross-entropy}~\citep{Kolumbus2019} differs from NLL by a factor of $n$, and \textbf{KL divergence}
further subtracts the entropy of the dataset.
\begin{align*} 
  \textstyle \CrossEntropy(f, y) &= \textstyle \frac{1}{n} \NLL(f, a), \\
  \textstyle \KL(f, y) &= \textstyle -\sum_{a=1}^d \bar p(y)_a \log(\frac{f_a}{\bar p(y)_a}).
\end{align*}
All three of these options are rooted in statistics: they make up the core of many statistical hypothesis tests, and all three of them agree with the likelihood principle.

% Family: squared error
Two more losses originate from regression problems and forecasting.
One is the \textbf{Brier score}, frequently referred to as mean-squared error or mean-squared deviation~\citep{Camerer2004,Golman2019}.
\begin{align*}
    \textstyle \Brier(f, y)
    = \frac{1}{n} \sum_{i=1}^n \|f - e_{y_i}\|_2^2.
    % = \sum_{a=1}^d \bar p(y)_a ( (1-f_a)^2 + \sum_{a' \neq a} f_{a'}^2 ). 
\end{align*}
A small modification is the \textbf{squared L2 error}, which is often also called MSE or MSD~\citep{Camerer2003, Selten2008}.
\begin{equation*} \label{eqn:loss-l2}
    \textstyle \SquaredLTwo(f, y)
    = \|f - \bar p(y)\|_2^2
    = \sum_{a=1}^d (f_a - \bar p(y)_a)^2.
\end{equation*}
Both are natural options for researchers familiar with regression problems, where it is typical to optimize a least-squares objective.
They also have roots in forecasting, as the Brier score was originally introduced for evaluating weather forecasts~\citep{Brier1950}.
We avoid the common but ambiguous term ``mean-squared error'' to avoid confusion.

%\0 Scoring rules 
Finally, a unifying definition that ties together many losses is the concept of a scoring rule.
\begin{definition} \cite[][page 2.]{Gneiting2007}
    \label{def:scoring-rule}
    A \emph{scoring rule} is a function $S: \Delta(A) \times A \to \R$ that maps a prediction $f \in \Delta(A)$ and a single outcome $a \in A$ to a score $S(f, a)$.
    By averaging these scores over the dataset, every scoring rule $S$ induces a loss function
    $\Obj_S(f, y) = \frac{1}{n} \sum_{i=1}^n S(f, y_i) = \sum_{a \in A} \bar p(y)_a S(f, a)$.
\end{definition}
Scoring rules are popular due to their simple functional form, which evaluates the prediction independently on each observation.
Their alignment properties are also the subject of several celebrated results~\citep{Savage1971,Gneiting2007}, which we describe in detail in \Cref{sec:revisitexisting}.
Error rate, negative log-likelihood, cross-entropy, and Brier score are scoring rules; \MAE, KL, and squared L2 are not.

\section{Formalizing an Ideal Loss Function}
\label{sec:axioms}
Each loss function from the previous section captures the quality of a prediction on a dataset with a single number, inducing preferences over these predictions. 
Of course, these loss functions will not always agree with each other about how to order different predictions.
Is each loss an equally acceptable choice?
To answer this question, we turn to an axiomatic analysis, formalizing axioms that a loss function in a behavioral setting ought to obey. 
We aim to identify axioms that are as weak as possible, only disqualifying loss functions that exhibit clearly objectionable behavior.

Our axioms can be grouped according to two distinct roles that a loss function serves in describing the quality of a prediction.
First, loss functions are used to compare models within a fixed experimental setting.
This occurs both during training, when a modeller aims to minimize expected loss on future data; and when evaluating models on a given dataset, comparing losses to see which model achieves the best performance. 
Our \emph{alignment} axioms address this case, requiring that the loss correctly orders predictions in cases where quality disparities are unambiguous; both are extensions of already standard \emph{propriety} axioms.
Second, loss functions are used to understand model performance more broadly; studies report losses and these values are interpreted as conveying information about how well a given model captured human behavior.
Our \emph{interpretability} axioms ensure that the loss can indeed be understood in this way, having a well-defined reference point and changing coherently as the data varies.

\longver{\subsection{Alignment Axioms}}
\shortver{\textbf{\textit{Alignment axioms.}}}
Our first alignment axiom pertains to the training process.
While training a predictive model, a modeller's goal is to select a prediction that has low expected test loss over new, unseen data.
Thus, if one model better fits the data than another,
it should receive a lower expected loss.

What do we mean by ``better''?
Reasonable people disagree about many comparisons between models, but some are unarguable.
For instance, a perfect prediction---one that exactly matches the data generating process---is better than an imperfect one.
A standard axiom known as Propriety captures this intuition, requiring that a perfect prediction minimizes the expected loss. 
To distinguish it from a Sample Propriety axiom that will follow, we refer to it as \emph{Distributional Propriety}.
\begin{axiom}[Distributional Propriety (DP)] For all predictions $f \in \Delta(A)$ and all $n \ge 1$, $p \in \Delta(A)$, \begin{kequation*}
    f \neq p \implies \E_{y \sim p^n}\Obj(p, y) < \E_{y \sim p^n}\Obj(f, y).
\end{kequation*}
\end{axiom}

Unfortunately, Distributional Propriety is insufficient for interpretable models: there is often no model in a given class that is able to output an arbitrary distribution.
We thus impose a stronger requirement that implies Distributional Propriety: that we should prefer one (potentially imperfect) prediction to another whenever the first is an unambiguously better fit. 
We formalize this idea with the notion of a Pareto improvement, which we will use extensively in what follows.
\begin{definition}[Pareto improvement]
    Let $p, q, r \in \Delta(A)$ be three distributions. 
    We say that $q$ is a \emph{Pareto improvement} over $p$ with respect to $r$, denoted by $q \succ_r p$, if for all $a \in A$, either $p_a \leq q_a \leq r_a$ or $p_a \geq q_a \geq r_a$, and furthermore this inequality between $p_a$ and $q_a$ is strict for at least one $a$.
\end{definition}
In other words, $q$ is a Pareto improvement over $p$ if $q$ is at least as close to $r$ as $p$ in every dimension, and strictly closer to $r$ in some dimension. 
Then, if one prediction is a Pareto improvement over another with respect to the true distribution---i.e., its predicted probabilities are uniformly closer to the truth---it should receive a lower expected loss.
\begin{axiom}[Distributional Pareto-Alignment (DPA)]
    For all predictions $f, g \in \Delta(A)$, $n \ge 1$, and $p \in \Delta(A)$,
    \begin{kequation*}
        f \succ_p g \implies \E_{y \sim p^n}\Obj(f, y) < \E_{y \sim p^n}\Obj(g, y).
    \end{kequation*}
\end{axiom}

A similar axiom was proposed by \citet{Lambert2008} under the name ``accuracy-rewarding'', and by \citet{Fissler2019} under the name ``order sensitive''.
There is only one difference: in their settings, a prediction is a vector in $\R^d$, containing independent predictions for $d$ different summary statistics of the dataset.
Because our predictions lie on the simplex, they are not independent in this way: e.g., predicting that one action has a probability of 1 constrains the predictions for all other actions to be 0.

Next, we consider the situation where two models' predictions are compared to each other on a fixed dataset.
This, too, is a fundamental step in behavioral modelling: to evaluate a proposed model, one must compare its predictions to other existing models on \emph{some} dataset to understand whether their proposal better captures human behavior.
Here, if one model fits the data better than another, it should receive a lower loss.

As with DP, it is standard to insist that the loss must be minimized when the empirical distribution is reported.
\begin{axiom}[Sample Propriety (SP)]
    For all predictions $f \in \Delta(A)$ and sampled datasets $y \in \Datasets(A)$, 
    \begin{kequation*}
        f \neq \bar p(y) \implies \Obj(\bar p(y), y) < \Obj(f, y).
    \end{kequation*}
\end{axiom}

As above, though, Sample Propriety is insufficient for interpretable models.
In this case, it is necessary to prefer predictions that are clearly closer to the empirical distribution, accurately reflecting improvements even away from the optimum.
We capture this intuition with a second alignment axiom, which we refer to as Sample Pareto-Alignment.
\begin{axiom}[Sample Pareto-Alignment (SPA)]
    For all predictions $f, g \in \Delta(A)$ and sampled datasets $y \in \Datasets(A)$, 
    \begin{kequation*}
        f \succ_{\bar p(y)} g \implies \Obj(f, y) < \Obj(g,y).
    \end{kequation*}
\end{axiom}
In the same way as DPA implies DP, SPA implies SP.

\longver{\subsection{Interpretability Axioms}}
\shortver{\textbf{\textit{Interpretability axioms.}}}
Our alignment axioms constrain how the loss may vary as the prediction varies.
Our next axioms constrain how the loss may vary as the data varies. 
Such constraints are important for ensuring that loss represents an understandable measurement of a prediction's quality.

Because it is possible to evaluate a model on multiple observations, one simple way that the data could be changed is simply by observing the same empirical distribution with a different set of observations.
This could happen if an experimenter made the same observations in a different order, or collected twice as many observations.
Since each observation is independent (e.g., representing an independent trial with a distinct participant), we argue that the loss should be unaffected by such changes to the data.
\begin{axiom}[Empirical Distribution Sufficiency (EDS)]
    For all datasets $y, y' \in \Datasets(A)$ and predictions $f \in \Delta(A)$, 
    $\bar p(y) = \bar p(y') \implies \Obj(f, y) = \Obj(f, y')$.
\end{axiom}
This implies a weaker axiom of \emph{exchangeability}---that permuting the observations does not affect the loss---which is a standard assumption in statistics~\citep[e.g.,][]{Easton1989}.

% CPR
What if the dataset varies in a more substantial way?
For example, one might replicate an experiment with another group of participants, producing a new set of observations for the same setting, or run slight variations to an experiment to assess their impact on the quality of a model \citep{Goeree2001}.
In both cases, it would be undesirable if the change in the data could cause the prediction to clearly decrease in quality, but be awarded a better loss. 

As with varying predictions, there are many ways in which datasets could vary for which reasonable people could disagree about whether the same prediction ought to receive a higher or lower loss.
However, we can again leverage the insight that Pareto improvements are unambiguously better: 
holding a prediction fixed, if the empirical probabilities of the data are brought closer to the predictions for at least some actions and further in none, it is clear that this dataset is better described by the prediction. 
In such cases, we require that the loss must also improve.
\begin{axiom}[Counterfactual Pareto-Regularity (CPR)]
    Let $f \in \Delta(A)$ be a fixed prediction.
    Suppose that $y, y' \in \Datasets(A)$ are two datasets of equal size, where $n(y) = n(y')$. Then \begin{kequation*}
        \bar p(y) \succ_{f} \bar p(y') \implies \Obj(f, y) < \Obj(f, y').
    \end{kequation*}
\end{axiom}
Note that this axiom leverages the discrete outcome space, 
as it does not obviously generalize to arbitrary distributions.

% Normalization
Up to this point, all of the axioms have only described equalities or inequalities between certain pairs of losses.
None have constrained the precise numerical values of the losses: indeed, if $\Obj$ satisfies all of these axioms, then any positive affine transformation $a\Obj + b$ (with $a > 0$) does too.
This leaves users with a choice of how to set these two degrees of freedom.
We propose to use this freedom to constrain the minimum loss, requiring that a perfect prediction achieves a loss of zero (which must be the loss function's minimum, by SP).
This makes the loss easier to interpret: when the analyst has multiple observations in the same setting, it removes the possibility for irreducible error, where even a perfect prediction could get a positive loss.

\begin{axiom}[Zero-Minimum (ZM)]
    For all $y \in \Datasets(A)$, $\Obj(\bar p(y), y) = 0$.
\end{axiom}

ZM is admittedly the most subjective of our axioms: 
for example, on some problems, it might be reasonable to anchor the loss to a different baseline, such as a uniform random prediction.
However, its addition is inconsequential when analyzing existing loss functions: in \Cref{sec:revisitexisting}, we show that each commonly used loss that violates ZM also violates CPR.

\section{Diagonal Bounded Bregman Divergences}\label{sec:dbbds}
% outline
With these desiderata in mind, the obvious question is: are there loss functions that satisfy all of our axioms? 
In this section, we provide a positive answer.
We first appeal to existing results to show that even asking for a subset of the axioms gives these loss functions considerable structure: Bregman divergences are essentially the only losses that satisfy SP, DP, EDS, and ZM.
Narrowing down this class further, we identify a family of losses, which we coin \emph{diagonal bounded Bregman divergences}, that each satisfy our whole set of axioms (SPA, DPA, CPR, EDS, and ZM).

% Bregman divergences
Let us now make these claims more precise. We first define a Bregman divergence. Let $\overline{\R}$ denote the extended real numbers $\R \cup\{\pm \infty\}$, and adopt the convention that $0 \cdot \infty = 0$. 
% subgradient
\begin{definition}\label{def:subgradient} Let $B: C \to \R$ be a closed and proper strictly convex function on a convex set $C \subseteq \R^k$. 
    Then a \emph{subgradient} of $B$ is a function \smash{$dB: C \to \overline{\R}^k$} such that 
    \[B(x) - B(x_0) \geq dB(x_0)^T (x - x_0)\]
    for all $x_0, x \in C$.
    If $B$ is also differentiable, it has a unique subgradient $\nabla B$ on the interior of $C$.
\end{definition} 

% Bregman divergence
\begin{restatable}{definition}{defbregmandivergence}    
    Given a closed and proper strictly convex function $B: C \to \R$ and subgradient $dB$ of $B$, the \emph{Bregman divergence} \smash{$\nabla_{(B, dB)}: C\times C \to \overline{\R}_{\geq 0}$} of $B$ and $dB$ is 
    \[\nabla_{(B, dB)} (p, q) = B(p) - B(q) - dB(q)^T (p-q).\]
\end{restatable}
We now leverage existing work from the field of property elicitation.
\citet{Abernethy2012} show that essentially all loss functions satisfying DP are equivalent to Bregman divergences between a summary statistic of the dataset and the prediction, up to a translation by a function of the data.
This immediately yields the following result.

\begin{theorem}[Corollary of Theorem 11 of \citet{Abernethy2012}, informal]\label{thm:af12}
    For any $n$, 
    under mild technical conditions, a loss function $\Obj$ that satisfies DP must be of the form \begin{kequation*}
    \Obj(f,y) = \nabla_{(B, dB)}(\rho(y), f) + c(y)
    \end{kequation*}
    for some closed and proper strictly convex function $B$,
    subgradient $dB$ of $B$, 
    translation $c: A^n \to \R$,
    and summary statistic $\rho: A^n \to \Delta(A)$, 
    where \smash{$\E_{y\sim p^n} \rho(y) = p$} for all $p$.
\end{theorem}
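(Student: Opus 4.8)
The plan is to recognize the Distributional Propriety axiom (DP) as precisely the property-elicitation condition studied by \citet{Abernethy2012}, and then to invoke their Theorem 11 essentially verbatim. Concretely, embed each outcome $a \in A$ as the vertex $e_a \in \Delta(A) \subseteq \R^d$, so that the identity property $\Gamma(p) = p$ can be written as the \emph{linear} property $\Gamma(p) = \E_{a \sim p}[e_a]$. A loss that elicits this property is one whose expected value $\E_{y \sim p^n}\Obj(f, y)$, viewed as a function of the report $f$, is uniquely minimized at $f = p$ --- which is exactly what DP asserts. Thus DP says that $\Obj$ is a proper loss for the mean property on the simplex, putting us squarely in the setting of \citet{Abernethy2012}.

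With this identification in hand, I would proceed in three steps. First, check that the hypotheses of Theorem 11 of \citet{Abernethy2012} are met: the property $\Gamma(p) = p$ is linear and continuous, its range is the full-dimensional simplex (a convex set), and the ``well-behaved''/``mild technical conditions'' in the statement are taken to supply the regularity (measurability and finiteness of the relevant expectations, together with a non-degeneracy condition) that their result requires. Second, apply their characterization to conclude that any such $\Obj$ must take the Bregman form
\[
    \Obj(f, y) = \nabla_{(B, dB)}(\rho(y), f) + c(y),
\]
where $B$ is convex with subgradient $dB$, $\rho$ is the statistic defining the elicited linear property, and $c(y)$ collects all terms independent of the report $f$. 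Third, upgrade convexity of $B$ to strict convexity: the inequality in DP is strict, so the expected-loss minimizer at $f = p$ is unique, and uniqueness of the minimizer of an expected Bregman divergence is equivalent to strict convexity of $B$.

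It remains to pin down the stated constraint $\E_{y \sim p^n}\rho(y) = p$. Here I would use the standard identity that, for a Bregman divergence, the map $f \mapsto \E_{y \sim p^n}\nabla_{(B, dB)}(\rho(y), f)$ is minimized over $f$ exactly at $f = \E_{y \sim p^n}\rho(y)$, and, by strict convexity, only there; since the additive term $c(y)$ does not depend on $f$, the expected loss $\E_{y \sim p^n}\Obj(f, y)$ has the same minimizer. Because DP forces this minimizer to equal $p$ for every $n \ge 1$ and every $p \in \Delta(A)$, we must have $\E_{y \sim p^n}\rho(y) = p$ identically, which is the claimed unbiasedness of $\rho$. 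The empirical distribution $\rho(y) = \bar p(y)$ is the canonical statistic satisfying this.

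The main obstacle is bridging the gap between the single-observation formulation of \citet{Abernethy2012} and our setting, in which the ``outcome'' is an entire dataset $y$ of arbitrary size $n$ drawn from the product distribution $p^n$. Two points require care. First, the quantity we elicit is a property of the underlying $p$ rather than of a single draw, so I must verify that passing to the product distribution $p^n$ keeps us within the class of linear properties their theorem covers, and that a single convex potential $B$ can be chosen to work simultaneously across all sample sizes $n$ rather than yielding a separate $B_n$ for each $n$. Second, their representation is canonical only up to the translation $c$ and the choice of statistic $\rho$, so I would check that this freedom is exactly what appears in the statement and introduces no further degrees of freedom. These bookkeeping issues are why the theorem is stated as ``informal'': the substantive mathematics is contained in \citet{Abernethy2012}, and the work here lies in translating our axiomatic setup into their hypotheses.
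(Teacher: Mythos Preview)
Your proposal takes essentially the same route as the paper: cast DP as a property-elicitation statement and read off the Bregman representation from Theorem~11 of \citet{Abernethy2012}. The paper's formal treatment (given in the appendix as part of the proof of \Cref{thm:bregman_characterization_informal}) instantiates Abernethy--Frongillo with outcome space $\Omega = A^n$ for a \emph{fixed} $n$, which dissolves your worry about a single potential $B$ working across all sample sizes: the result is stated and proved one $n$ at a time.

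The one place where the two treatments diverge is the upgrade from convex to strictly convex $B$. You argue that strict DP gives a unique expected-loss minimizer and that this is ``equivalent to strict convexity of $B$''; but DP only asserts uniqueness when the data distribution has the product form $p^n$, not for arbitrary $\mu \in \Delta(A^n)$, so the strict subgradient inequality on all of $\Delta(A)$ does not follow from that observation alone. The paper confronts this gap explicitly by going back into Abernethy--Frongillo's construction: at the step where one selects, for each basis point $b_i \in \Delta(A)$, a preimage $\mu_i \in \Gamma^{-1}(b_i)$, the paper takes $\mu_i = b_i^n$ so that DP can be invoked, and then argues that with this choice the strict inequality $B(f) + dB(f)^{T}(f'-f) < B(f')$ goes through for all $f \neq f'$, yielding strict convexity via a standard convex-analysis fact. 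Your separate derivation of the unbiasedness constraint $\E_{y\sim p^n}\rho(y) = p$ from the Bregman mean-minimizer identity is correct and is a detail the paper leaves implicit.
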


We extend this result, showing that the SP and ZM axioms additionally determine $c$ and $\rho$, and that the EDS axiom removes the dependence on $n$.
In other words, essentially every loss function satisfying DP, SP, ZM, and EDS is a Bregman divergence between the empirical distribution and the prediction.
\begin{theorem}[Informal]\label{thm:bregman_characterization_informal}
Under mild technical conditions, 
a loss function $\Obj$ satisfies SP and DP if and only if \begin{kequation*}\label{eqn:spdpbd}
    \Obj(f,y) = \nabla_{(\mathcal{B}_n, d\mathcal{B}_n)}(\bar p(y), f) + c(y)
\end{kequation*}
for some family of closed and proper strictly convex functions $\mathcal{B}$ 
with subgradients $d\mathcal{B}$
and some translation $c$.
Additionally, $\Obj$ satisfies ZM if and only if $c(y) = 0$ for all y,
and $\Obj$ further satisfies EDS if and only if there is some convex function $B$ and subgradient $dB$ such that $\nabla_{(\mathcal{B}_n, d\mathcal{B}_n)} = \nabla_{(B, dB)}$ for all $n$.
\end{theorem}

We defer a formal statement and proof of \Cref{thm:bregman_characterization_informal} to \Cref{appendix:bregman_characterization}, as describing the technical conditions on $\Obj$ takes care.
The proof obtains $\Obj$ satisfying DP from Theorem 11 of \citet{Abernethy2012}, then applies standard facts about Bregman divergences to show that the additional axioms constrain $\rho$, $c$, and $\mathcal{B}$ as described. 
The reverse direction follows from standard observations from convex analysis. 

However, not all Bregman divergences satisfy our remaining axioms SPA, DPA, and CPR. 
For example, taking \smash{$B(f) = \sum_{a=1}^d f_a \log f_a$} recovers the KL divergence; we will show in Section \ref{sec:revisitexisting} that this does not satisfy SPA. 
Our main result is that all of our axioms are satisfied by the restricted set of \emph{diagonal bounded Bregman divergences}.
\begin{restatable}[Diagonal bounded Bregman divergence (DBBD)]{definition}{dbbddefn}
    Let $b: [0,1] \to \R$ be a continuously differentiable convex function where $b'$ is bounded on $[0, 1]$. Let $B_b(x) = \sum_i b(x_i)$ for $x \in [0,1]^d$. 
    Then, a \emph{diagonal bounded Bregman divergence} is a loss function $\Obj: \Delta(A) \times \Datasets(A) \to \R$, where 
    $\Obj(f,y) = \nabla_{(B_b, \nabla B_b)}(\bar p(y), f).$
\end{restatable}

\begin{restatable}{theorem}{dbbd}\label{thm:dbbd}
    If $\Obj$ is a DBBD, then $\Obj$ satisfies SPA, DPA, EDS, CPR, and ZM.
\end{restatable}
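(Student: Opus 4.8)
Since $B_b(x)=\sum_a b(x_a)$ has gradient $\nabla B_b(x)_a = b'(x_a)$, a DBBD decomposes coordinatewise, and this is the structure I would exploit throughout:
\begin{equation*}
    \Obj(f,y) = \nabla_{(B_b,\nabla B_b)}(\bar p(y), f) = \sum_{a=1}^d \beta(\bar p(y)_a, f_a), \qquad \beta(s,t) := b(s)-b(t)-b'(t)(s-t),
\end{equation*}
where $\beta$ is the scalar Bregman divergence generated by $b$. Two axioms are then immediate. Because $\Obj$ depends on $y$ only through $\bar p(y)$, which is permutation-invariant, Ex holds; because $\beta(s,s)=0$, setting $f=\bar p(y)$ gives $\Obj(\bar p(y),y)=0$, which is ZM. For the $f$-free term $b(s)$ the continuity of $b$ on the compact interval $[0,1]$ keeps everything finite, and the boundedness of $b'$ on $[0,1]$ keeps each $\beta(\bar p(y)_a, f_a)$ finite even when a coordinate $f_a$ vanishes on the boundary of the simplex. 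I would also note at the outset that $b$ must be strictly convex for $\nabla_{(B_b,\nabla B_b)}$ to be a genuine Bregman divergence, so $b'$ is strictly increasing and $\beta(s,t)>0$ whenever $s\neq t$; these two facts drive the remaining three axioms.

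For SPA and CPR the whole argument reduces to the monotonicity of $\beta$ in each of its arguments, which I would read off from two algebraic identities rather than from a second derivative of $b$ (which need not exist). For SPA I would use the three-point decomposition
\begin{equation*}
    \beta(\bar p(y)_a, g_a) - \beta(\bar p(y)_a, f_a) = \beta(f_a, g_a) + \big(b'(f_a)-b'(g_a)\big)\big(\bar p(y)_a - f_a\big).
\end{equation*}
When $f \succ_{\bar p(y)} g$, each $f_a$ lies between $g_a$ and $\bar p(y)_a$, so in both branches of the Pareto definition the sign of $b'(f_a)-b'(g_a)$ matches that of $\bar p(y)_a - f_a$, making the product nonnegative; together with $\beta(f_a,g_a)\geq 0$ this gives a coordinatewise inequality, strict wherever $f_a\neq g_a$. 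Summing over $a$ (and using that the Pareto condition forces strictness in at least one coordinate) yields $\Obj(f,y)<\Obj(g,y)$. For CPR I would fix $f$ and vary the first argument with the companion identity
\begin{equation*}
    \beta(\bar p(y')_a, f_a) - \beta(\bar p(y)_a, f_a) = \beta(\bar p(y')_a, \bar p(y)_a) + \big(b'(\bar p(y)_a)-b'(f_a)\big)\big(\bar p(y')_a - \bar p(y)_a\big),
\end{equation*}
whose right-hand side is again nonnegative (strictly positive when $\bar p(y)_a\neq\bar p(y')_a$) under $\bar p(y)\succ_f \bar p(y')$, giving $\Obj(f,y')>\Obj(f,y)$.

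DPA is the step needing the most care, but the diagonal structure again does the work. I would push the expectation through the sum and use that the empirical mean is unbiased, $\E_{y\sim p^n}\bar p(y)_a = p_a$; since the only place $\bar p(y)_a$ enters the $f$-dependent part of $\beta$ is linearly (through $-b'(f_a)\,\bar p(y)_a$), this yields
\begin{equation*}
    \E_{y\sim p^n}\Obj(f,y) = \sum_{a=1}^d\Big(\E_{y\sim p^n} b(\bar p(y)_a) - b(p_a)\Big) + \sum_{a=1}^d \beta(p_a, f_a),
\end{equation*}
where the first sum is a finite constant independent of $f$. Comparing $\E\,\Obj(f,y)$ with $\E\,\Obj(g,y)$ therefore reduces to comparing $\sum_a\beta(p_a,f_a)$ with $\sum_a\beta(p_a,g_a)$, which is exactly the SPA computation with $\bar p(y)$ replaced by $p$, so $f\succ_p g$ gives $\E\,\Obj(f,y)<\E\,\Obj(g,y)$.

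The main obstacle I anticipate is establishing the \emph{strict} inequalities with only continuous differentiability of $b$ in hand: I cannot differentiate $\beta$ in its arguments to obtain a sign, so I must lean on the exact identities above, on the strict monotonicity of $b'$, and on $\beta(s,t)>0$ for $s\neq t$. The only genuinely new computation beyond these identities is the unbiasedness-based expectation decomposition underlying DPA; the rest is a short sign case-analysis over the two branches of the Pareto definition, carried out independently in each of the $d$ coordinates.
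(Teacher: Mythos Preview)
Your proposal is correct and takes essentially the same approach as the paper: both decompose the DBBD coordinatewise, dispatch Ex and ZM immediately, reduce DPA to the SPA computation via linearity of the $f$-dependent terms in $\bar p(y)$ together with $\E\,\bar p(y)=p$, and establish SPA and CPR by a coordinatewise sign case-analysis using the monotonicity of $b'$. Your explicit three-point identity $\beta(r,g)-\beta(r,f)=\beta(f,g)+(b'(f)-b'(g))(r-f)$ simply packages the same inequality chain the paper writes out line by line, and your observation that $b$ must be strictly convex (hence $\beta(s,t)>0$ for $s\neq t$) to get the strict inequalities makes explicit a point the paper leaves implicit in the DBBD definition.
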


We again defer the proof to \Cref{appendix:dbbd_proof}.
Briefly, EDS is trivial; ZM follows from Theorem \ref{thm:bregman_characterization_informal}; SPA, DPA, and CPR leverage the diagonal structure and convexity of $B_b$.

\section{Evaluating Existing Loss Functions}\label{sec:revisitexisting}
We now revisit the loss functions introduced in Section \ref{sec:existing-losses}. 
It is straightforward to see that squared L2 error is a DBBD (with $b(x) = x^2$) and so it satisfies all of the axioms. 
Each other loss function violates at least one axiom (Table~\ref*{tab:existing-losses-axioms}).
We give an example for each loss below, showing that each axiom violation leads to undesirable results under reasonable conditions.
We also demonstrate many of these axiom violations on real behavioral data in Appendix~\ref{appendix:real_data}.

\begin{table*}[t]
    \centering
    \begin{tabular}{rcccccccc}
        Axiom & 
        \rot{Error rate} &
        \rot{\MAE} &
        \rot{NLL} & 
        \rot{Cross-entropy} & 
        \rot{KL divergence} & 
        \rot{Brier score} & 
        \rot{Squared L2 error} \\
        \midrule
        %\multicolumn{5}{c}{Common loss functions (Section~\ref{sec:existing-losses})} \\
        Sample Pareto-Alignment (SPA)  & 
         \no & \yes & \no & \no & \no & \yes & \yes \\
        Sample Propriety (SP)  & 
        \no & \yes & \yes & \yes & \yes & \yes & \yes \\
        Distributional Pareto-Alignment (DPA) & 
        \no & \no & \no & \no & \no & \yes & \yes \\
        Distributional Propriety (DP) & 
        \no & \no & \yes & \yes & \yes & \yes & \yes \\
        Empirical Distribution Sufficiency (EDS) & 
        \yes & \yes & \no & \yes & \yes & \yes & \yes \\
        Counterfactual Pareto-Regularity (CPR) &
        \no & \yes & \no & \no & \no & \no & \yes \\
        Zero Minimum (ZM) &
        \no & \yes & \no & \no & \yes & \no & \yes \\
        \bottomrule
    \end{tabular}
    \caption{Existing losses and their status under the axioms.}
    \label{tab:existing-losses-axioms}
\end{table*}

\longver{\subsection{Error rate}}
\shortver{\paragraph{Error rate.}} 
Error rate violates every axiom except EDS. 
We show that error rate violates both SP and ZM with the following example.
\begin{example}\label{ex:err} 
    Consider a game in which a player can choose between two actions, ``defect'' and ``cooperate''. 
    Suppose that in the true distribution of human play, two-thirds of players defect: $p = (2/3, 1/3)$.
    In an experiment with 10 distinct participants, an analyst finds that 6 chose to defect, while the remaining 4 chose to cooperate, yielding an empirical distribution of $\bar p(y) = (0.6, 0.4)$. Letting $(f, 1-f)$ be a prediction in this setting, the error rate on this dataset is 
    \[
        \textstyle
        \ErrorRate(f,y) = 1 - 0.6f - 0.4(1-f) = 0.6 - 0.2f.
    \]
    This expression is minimized by the prediction $f = 1$, which has an error rate of $0.4$.
    In particular, this prediction achieves a lower error rate than reporting the empirical distribution, which has an error rate of $\ErrorRate(\bar p(y), y) = 0.48$. 
    %In other words, a model that does a much poorer job of predicting the data gets a lower loss.
\end{example}
This example illustrates a general problem: for any dataset, the error rate is minimized by predicting the mode, giving more credit to predictions that overestimate the probability of the most likely action.
    
\longver{\subsection{Mean absolute error}}
\shortver{\paragraph{Mean absolute error.}}
\MAE\ satisfies both SPA and ZM, but does not satisfy DPA or DP.
In some cases, a model that predicts the true population distribution gets worse expected \MAE\ on unseen data than an incorrect prediction.
\begin{example} 
    Suppose, as in Example \ref{ex:err}, that the true distribution is $p = (2/3, 1/3)$.
    However, now suppose that the dataset is not yet available; all that is known is that it consists of 10 independent observations sampled from $p$. 
    Then, the expected loss of predicting $(f, 1-f)$ is 
    $2\E_{y \sim p^{10}}|f-\bar p(y)_D|,$
    where $10\bar p(y)_D$, the number of participants that defect, is a Binomial random variable with parameters $n = 10, p = 2/3$. 
    This expected loss is minimized by predicting the median of $\bar p(y)_D$, which is $0.7$. 
    In particular, this prediction receives an expected loss of $0.235$, which is lower than the expected loss of $0.243$ achieved by predicting the true distribution.
\end{example}
This example, too, generalizes: 
in any setting with two actions, the expected loss is minimized by reporting the median of the empirical probability distribution, 
which is generally not equal to $p$.
In other words, if a model is designed to minimize expected loss, \MAE\ fails to elicit the true distribution.

\longver{\subsection{Negative log-likelihood}}
\shortver{\paragraph{Negative log-likelihood.}}
NLL is the only loss that violates EDS, which we show in the following example.
\begin{example} \label{ex:nll}
    A second experimenter attempts to reproduce the results from Example~\ref{ex:err}.
    They first fit a model to the existing dataset $y$, which has an empirical distribution of $\bar p(y) = (0.6, 0.4)$.
    Their model fits perfectly, returning the exact empirical distribution and getting a negative log-likelihood of $\NLL(\bar p(y), y) = 2.9$.
    They then collect their own dataset $y'$, re-running the experiment with a different set of 20 participants; they find that 12 defect and 8 cooperate, resulting in the same empirical distribution.
    Although their model still fits the data perfectly, they are surprised to see that it now receives a higher loss of $\NLL(\bar p(y'), y') = 5.8$.
\end{example}
In general, negative log-likelihood scales linearly with the number of observations in the dataset, as it takes a sum over the observations rather than an average.

\longver{\subsection{Cross-entropy and Brier score}}
\shortver{\subsection{Cross-entropy and Brier score.}}
We group the next two losses together as they suffer from the same key issue: they violate both CPR and ZM.
\begin{example}
    Undeterred, our experimenter from Example~\ref{ex:nll} considers different loss functions.
    Using Brier score and cross-entropy to evaluate their perfect model on the original dataset, they obtain losses of
    \[
        \Brier(\bar p(y), y) = 0.48; \quad
        \CrossEntropy(\bar p(y), y) = 0.29.
    \]
    They collect a third dataset $y''$; these 10 participants are quite different, with 9 defecting and only one cooperating.
    They are surprised to find that, despite failing to predict this new dataset perfectly, their model receives lower losses of
    \[
        \Brier(\bar p(y), y'') = 0.36; \quad
        \CrossEntropy(\bar p(y), y'') = 0.24.
    \]
\end{example}
The first dataset in this example demonstrates violations of ZM: there is no indication that the model has made a perfect prediction, leaving it unclear to the experimenter whether there is room for improvement. 
In general, the both losses have a non-zero minimum as long as the dataset has two distinct observations.
The second dataset shows violations of CPR: it intuitively appears that the model is now better, even though it no longer outputs the correct distribution.

\longver{\subsection{KL divergence}}
\shortver{\paragraph{KL divergence.}}
The KL divergence is a translated version of cross-entropy that satisfies ZM, but not SPA, DPA, or CPR.
The key issue is that KL divergence gives infinite losses at the boundary.
That is, when a model predicts that an action has zero probability of being selected, but the action is observed in the data, that model will have an infinite KL divergence.
This leads to situations such as the following.
\begin{example}
    Now, suppose that there are three actions, with a true distribution of $p = (0.001, 0.199, 0.8)$, and that among $100$ participants we observe $y = (1, 19, 80)$, yielding an empirical distribution of $\bar p(y) = (0.01, 0.19, 0.80)$.
    Consider comparing two predictions on this dataset: the very coarse prediction of $f = (0, 1, 0)$ and the far more precise $f' = (0, 0.2, 0.8)$. 
    Although $f'$ is a better prediction, as it is closer to $\bar p(y)$ than $f$ on both the second and third actions, both receive equal losses of $\KL(f, y) = \KL(f', y) = \infty$.
\end{example}
In general, when every action appears at least once in the dataset, KL divergence assesses every prediction that places 0 probability on any action as equally bad, and considers all of these predictions to be worse than any prediction having full support.
This is a serious problem, as it is common for every action to be played at least once in sufficiently large behavioral datasets.
This makes it difficult to evaluate classical economic predictions, such as Nash equilibrium, which assign 0 probability to many actions. To avoid this issue, some researchers \cite[e.g.,][]{Stahl1994} perturb the predictions of such models to yield finite losses, but in doing so introduce an important new parameter and sacrifice the ability to evaluate the original models.

\longver{\subsection{Scoring rules}}
\shortver{\paragraph{Scoring rules.}}
Recall that error rate, cross-entropy, negative log-likelihood, and Brier score
each violated the ZM and CPR axioms. 
It turns out that these failures are common to all scoring rules, implying that scoring rules should not be used to report model performance.

\begin{restatable}{proposition}{srnegative}\label{thm:scoring-rules-negative-result}
    Every scoring rule that satisfies SPA violates ZM. Moreover, no scoring rule satisfies CPR.
\end{restatable}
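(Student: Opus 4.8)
The plan is to exploit the defining feature of scoring rules: for a fixed prediction $f$, the induced loss $\Obj_S(f, y) = \sum_{a} \bar p(y)_a\, S(f, a)$ is \emph{linear} in the empirical distribution $\bar p(y)$, with coefficient vector $S(f, \cdot) \in \R^d$. Both halves of the proposition follow from pushing this linearity against the strict monotonicity demanded by our axioms.

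For the first claim, I would introduce the ``self-loss'' $G(q) = \sum_a q_a S(q, a)$, i.e.\ the loss of reporting $q$ on a dataset whose empirical distribution is $q$. Since SPA implies SP, it suffices to derive a contradiction from SP and ZM. The key step is to show that SP forces $G$ to be \emph{strictly midpoint-concave} on the achievable empirical distributions. Concretely, I would take any two distinct achievable distributions $q_1, q_2 \in \Delta(A)$ and their midpoint $q_m = \tfrac12(q_1 + q_2)$, first checking that $q_m$ is itself achievable (writing the $q_i$ over a common denominator exhibits $q_m$ as the empirical distribution of a dataset of some size $N$). Applying SP at a dataset with empirical distribution $q_1$ but reporting $q_m \neq q_1$ gives $q_1 \cdot S(q_m, \cdot) > q_1 \cdot S(q_1, \cdot) = G(q_1)$, and the symmetric inequality holds at $q_2$. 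Averaging these and using $G(q_m) = \tfrac12\, q_1 \cdot S(q_m,\cdot) + \tfrac12\, q_2 \cdot S(q_m,\cdot)$ yields $G(q_m) > \tfrac12\big(G(q_1) + G(q_2)\big)$. But ZM asserts $G \equiv 0$ on all achievable distributions, so this reads $0 > 0$, a contradiction. Hence no scoring rule satisfies both SP (and therefore SPA) and ZM.

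For the second claim, I would fix the interior prediction $f = (\tfrac12, \tfrac12)$ in a two-action setting (any interior $f$ works, and more actions embed this case) and set $s = S(f, \cdot)$. For this fixed $f$, the difference $\Obj_S(f, y') - \Obj_S(f, y) = (\bar p(y') - \bar p(y)) \cdot s$ is linear in the empirical distributions. The observation driving the contradiction is that a Pareto improvement toward $f$ can move the empirical distribution in \emph{opposite} directions depending on which side of $f$ it starts from, whereas a single linear functional can decrease in only one direction. I would exhibit two explicit pairs of equal-size datasets: one with $\bar p(y') = (0.2, 0.8)$ and $\bar p(y) = (0.4, 0.6)$, so that $\bar p(y) \succ_f \bar p(y')$ while the required decrease forces $s_1 < s_2$; and a mirror pair with $\bar p(y') = (0.8, 0.2)$ and $\bar p(y) = (0.6, 0.4)$, which forces $s_1 > s_2$. (If $s_1 = s_2$, then $\Obj_S(f, \cdot)$ is constant along such changes and CPR's strict inequality already fails.) Since these constraints on $s$ are mutually exclusive, CPR cannot hold for any scoring rule.

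I expect the main obstacle to be the first part: the delicate point is upgrading propriety from ordinary concavity of $G$ to the \emph{strict} midpoint inequality, and ensuring the midpoint $q_m$ is genuinely an achievable empirical distribution so that ZM applies to it as well as to $q_1$ and $q_2$. The second part is comparatively routine once one notices that scoring-rule losses are linear in $\bar p(y)$; the only care needed is verifying that the constructed datasets satisfy $\bar p(y) \succ_f \bar p(y')$ under the definition of Pareto improvement and have equal size.
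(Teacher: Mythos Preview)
Your proposal is correct, but both halves differ from the paper's proof.

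\textbf{First claim (SPA vs.\ ZM).} The paper's argument is more direct: from ZM it extracts $S(e_a,a)=0$ for each $a$ (applying ZM to the singleton dataset $\{a\}$), and from SPA applied to the same singleton it gets $S(f,a) > S(e_a,a) = 0$ whenever $f \neq e_a$. Then for any $y$ whose empirical distribution is not a vertex, ZM gives $0 = \sum_a \bar p(y)_a\, S(\bar p(y),a) > 0$. Your route instead uses only SP (hence is nominally stronger), deriving the strict midpoint-concavity inequality $G(q_m) > \tfrac12(G(q_1)+G(q_2))$ for the self-loss $G(q) = q\cdot S(q,\cdot)$, which contradicts $G\equiv 0$ from ZM. Your argument is perfectly valid---the achievability of $q_m$ is handled exactly as you say---but it involves an extra layer (the concavity detour), whereas the paper's proof is a two-line computation once one evaluates SPA and ZM on singleton datasets.

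\textbf{Second claim (CPR).} The paper makes a single clever choice that sidesteps your case analysis: it takes $f = \bar p(y)$ for some non-degenerate $y$, so that $\bar p(y) \succ_f \bar p(y')$ holds for \emph{every} $y'$ with $\bar p(y') \neq \bar p(y)$, and then picks $y'$ to be the constant dataset on $a^* \in \arg\min_a S(f,a)$. Since $\Obj_S(f,y)$ is a convex combination of the $S(f,a)$ while $\Obj_S(f,y') = S(f,a^*)$ is their minimum, the required strict inequality $\Obj_S(f,y') > \Obj_S(f,y)$ fails. This works verbatim for every $d \geq 2$. Your approach---pinning a fixed interior $f$ and producing two Pareto improvements that force $s_1 < s_2$ and $s_1 > s_2$---is also correct, but the ``more actions embed this case'' step is only sketched; carrying it out requires constructing equal-size datasets that agree on all coordinates except two and straddle $f$ there, which is routine but not as clean as the paper's uniform argument.
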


We defer the proof to the appendix. 
Intuitively, since scoring rules must consider each sample independently, they must treat every sample as if it were the entire dataset.
Then, in order to satisfy SPA, scoring rules must give positive losses to every nondeterministic prediction, causing them to violate the ZM axiom.
Moreover, scoring rules are linear in the empirical probabilities $\bar p(y)$ (\Cref{def:scoring-rule}). 
Any such linear function is minimized at one of its boundaries, meaning that it is not uniquely minimized at $\bar p(y) = f$ unless $\bar p(y)$ is a unit vector; hence, all scoring rules violate CPR.

However, scoring rules do not necessarily violate the alignment axioms.
In fact, for every Bregman divergence, there is a scoring rule that gives the same difference in losses between any two predictions on every dataset. 
For example, this relationship holds between the Brier score and squared L2 error.
To state this fact more generally, we recall a classic result characterizing the set of scoring rules satisfying DP.

\begin{theorem} \citep[][Theorem 1.]{Gneiting2007}
    A scoring rule satisfies DP if and only if there exists a strictly convex function $B: \Delta(A) \to \R$ and subgradient $dB$ such that, for all $f \in \Delta(A)$ and $a \in A$,
    \[
        S(f, a) = -B(f) - dB(f)^T (e_a - f).
    \]
    Furthermore, every such scoring rule satisfies SP.
\end{theorem}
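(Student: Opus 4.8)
The plan is to recognize this as the classical Savage / Gneiting--Raftery representation of strictly proper scoring rules and to reduce the distributional-propriety condition to a single convexity statement. The first move is to compute the expected loss of a scoring rule. Since $\Obj_S(f, y) = \sum_{a} \bar p(y)_a\, S(f, a)$ and $\E_{y \sim p^n} \bar p(y)_a = p_a$, the expected loss collapses to
\[
    \bar S(f, p) := \E_{y \sim p^n} \Obj_S(f, y) = \sum_{a \in A} p_a\, S(f, a),
\]
which is independent of $n$ and affine in $p$. Hence DP is equivalent to the single statement that, for every $p \in \Delta(A)$, the map $f \mapsto \bar S(f, p)$ is uniquely minimized at $f = p$; this is exactly strict propriety of $S$ in the loss convention, and collapsing the ``for all $n$'' quantifier is the linchpin that makes the rest finite-dimensional.

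For the ``only if'' direction, I would define the minimal expected loss $G(p) := \bar S(p, p)$, which by DP equals $\inf_f \bar S(f, p)$. As a pointwise infimum of the affine functions $\bar S(f, \cdot)$, $G$ is concave, so $B := -G$ is convex. The crucial observation is that, because $\bar S(f, p) \ge G(p)$ for all $p$ with equality at $p = f$, the affine function $\bar S(f, \cdot)$ is a supporting hyperplane of $G$ at $f$. Writing $s_f = (S(f, a))_{a \in A}$ so that $\bar S(f, p) = s_f^T p$, this says precisely that $dB(f) := -s_f$ is a subgradient of $B$ at $f$ in the sense of \Cref{def:subgradient}, using $B(f) = -s_f^T f$. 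A short computation then verifies the claimed identity $S(f, a) = -B(f) - dB(f)^T(e_a - f)$. Finally, strict convexity of $B$ follows from strict propriety: substituting the identity back gives $\bar S(f, p) - G(p) = \nabla_{(B, dB)}(p, f) = B(p) - B(f) - dB(f)^T(p - f)$, which DP forces to be strictly positive for all $f \neq p$, and a strict subgradient inequality at every pair implies strict convexity by combining the two supporting inequalities at the midpoint of a segment.

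The ``if'' direction and the final SP claim are then immediate from the same identity run in reverse. Given the stated form with $B$ strictly convex, the computation above shows $\bar S(f, p) - G(p)$ equals the Bregman divergence $\nabla_{(B, dB)}(p, f)$, which is positive for $f \neq p$ and zero at $f = p$, establishing DP. For SP, note that $\Obj_S(f, y) = \bar S(f, \bar p(y)) = G(\bar p(y)) + \nabla_{(B, dB)}(\bar p(y), f)$, so it is uniquely minimized over $f$ at $f = \bar p(y)$, which is exactly the SP inequality applied to the empirical distribution.

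The main obstacle I anticipate is not any single calculation but the care needed around the subgradient on the simplex: $\Delta(A)$ is a $(d-1)$-dimensional subset of $\R^d$, so subgradients of $B$ are non-unique and determined only up to additive multiples of the all-ones direction. The construction $dB(f) = -s_f$ picks out one concrete choice, and I would need to check that the supporting-hyperplane argument and the strict-convexity conclusion go through with $B$ regarded as a convex function on (the relative interior of) the simplex, matching \Cref{def:subgradient} with $C = \Delta(A)$. The finiteness of $A$ keeps everything finite-dimensional and sidesteps the measure-theoretic regularity conditions that complicate the fully general Gneiting--Raftery statement.
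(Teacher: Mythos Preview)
The paper does not supply its own proof of this theorem; it is quoted verbatim as Theorem~1 of \citet{Gneiting2007} and treated as an external result. Your proposal correctly reconstructs the classical Savage / Gneiting--Raftery argument: collapsing the ``for all $n$'' quantifier via linearity of $\E_{y\sim p^n}\bar p(y)$, defining $B$ as the negative of the minimal expected score $G(p)=\bar S(p,p)$, reading off a subgradient from the supporting-hyperplane characterization, and deducing strict convexity from the strict subgradient inequality at every pair (the midpoint argument you sketch is exactly the standard one, and is the same device the paper invokes via \citet{Hiriart-Urruty2001} in the proof of \Cref{thm:bregman_characterization_informal}). The SP claim is then immediate from $\Obj_S(f,y)=\bar S(f,\bar p(y))$, which is consistent with how the paper uses scoring rules elsewhere. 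Your flagged caveat about subgradients on the simplex being determined only up to the all-ones direction is correct and is precisely the freedom that \Cref{def:subgradient} permits when $C=\Delta(A)$; the choice $dB(f)=-s_f$ is one valid selection.
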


Now, suppose that $\Obj(f, y) = \nabla_{(B,dB)}(\bar p(y), f)$ is a Bregman divergence, 
and consider the alternative loss $\Obj'(f, y) = \Obj(f, y) + c(y)$, where $c(y)$ is an arbitrary function that depends only on the data. % (and not on the prediction).
This additive shift maintains the difference in losses between any two models on every dataset, and it is straightforward to show that it does not affect the status of any of the alignment axioms.
In particular, setting $c(y) = -B(\bar p(y))$ makes $\Obj'(f,y)$ a scoring rule.

What's more, these scoring rules are computationally easier to minimize than their corresponding DBBDs.
Scoring rules can be computed without explicitly calculating $\bar p(y)$, making them ideal for large datasets, as the loss can be evaluated without loading the entire dataset into memory at once.
Therefore, we do not recommend against the use of scoring rules for model training---it may often be a good idea!
We simply argue that researchers should use a corresponding DBBD when evaluating model performance.

\section{Conclusions}
Our goal in this paper was to identify suitable loss functions for evaluating behavioral models.
We took an axiomatic approach, developing axioms describing alignment and interpretability properties that such a loss function should satisfy.
We showed that almost all of the loss functions used in the field of behavioral game theory, including the entire class of scoring rules, violate at least one of these axioms.
However, it is indeed possible to construct loss functions that satisfy all of our axioms: we identified a large class---the diagonal bounded Bregman divergences---that does.  
Thus, we advocate that behavioral modelling work use one of these loss functions, with the squared L2 error as a natural incumbent. 

Although our motivation comes from behavioral game theory, 
recall that our arguments rely only on four characteristics of the field: 
the existence of a mapping from settings to finite, discrete distributions; 
the ability to obtain multiple observations for any setting; 
the goal of finding predictive models; 
and the need for these models to be interpretable.
Thus, our work provides guidance not only to behavioral game theorists, but to other researchers whose fields share these characteristics.
We are aware of examples in behavioral economics~\cite{Plonsky2019, Agrawal2020} and further afield in psychology~\cite{Busemeyer1993} and operations research~\cite{Hensher2000, Brenner2022}, 
and believe that there are yet more potential applications in political science and ecology.
We hope that our axiomatic view can help researchers across these disparate areas evaluate and interpret the performance of their models.

\longver{\subsection{Limitations and Future Work}}
\shortver{\paragraph{Limitations and Future Work.}}
All four of the characteristics played a role in our analysis: 
finite discrete distributions allowed us to formalize CPR; multiple observations motivated EDS and ZM; predictive models motivated DP and DPA; and interpretable models motivated DPA and SPA. 
This makes it clear that DP and DPA are not intended for descriptive modelling work, which focuses only on in-sample fit, and that DPA and SPA are unnecessary for evaluating high-capacity uninterpretable models such as deep neural nets, where propriety is sufficient.
The impact of our interpretability axioms is also limited, as they are not well motivated for modelling continuous distributions, such as energy consumption or climate variables, or in cases where only one sample can be observed, such as forecasting precipitation types.
It would be valuable to extend our results to these fields by developing suitable analogues of our axioms, lifting the need for discrete distributions or finding principled ways to aggregate similar observations.

Is it possible to make a theoretical argument for a \textit{single} best loss function?
If so, the path forward is to identify additional desirable axioms for loss functions in behavioral research.
For example, on ``rock-paper-scissors'' experiments, one might insist that loss functions be agnostic to the actions' identities,  ensuring that they do not treat ``rock'' differently from ``paper'' or ``scissors''.
Making compelling arguments for new axioms and understanding how they narrow down the space of permissible losses---indeed, whether any remain at all---is a valuable direction for future work.

\section*{Acknowledgements}
Thanks to Frederik Kunstner and Victor Sanches Portella for helpful discussions.
This work was funded by an NSERC CGS-D scholarship, 
an NSERC USRA award, 
an NSERC Discovery Grant, 
a DND/NSERC Discovery Grant Supplement, 
a CIFAR Canada AI Research Chair (Alberta Machine Intelligence Institute), 
awards from Facebook Research and Amazon Research, 
and DARPA award FA8750-19-2-0222, CFDA \#12.910 (Air Force Research Laboratory).

\bibliography{references}

\clearpage
\appendix

\appendixpage

We provide the following supplementary material in this appendix:
\begin{itemize}
    \item In \Cref{appendix:bregman_characterization}, we formally state and prove \Cref{thm:bregman_characterization_informal}.
    This theorem characterizes all ``well-behaved'' loss functions that satisfy SP, DP, and ZM, extending a result from \citetAppx{Abernethy2012}.
    \item In \Cref{appendix:dbbd_proof}, we prove \Cref{thm:dbbd}, showing that every DBBD satisfies all of the axioms. 
    \item In \Cref{appendix:sr_negative}, we prove \Cref{thm:scoring-rules-negative-result}, showing that no scoring rule can satisfy all of the axioms. 
    \item In \Cref{appendix:multiplegames}, we give a straightforward extension of the axioms to handle evaluations on multiple games. 
    \item In \Cref{appendix:real_data}, we empirically show that many axiom violations and interpretability problems discussed in the main text arise on real data.
\end{itemize}

\section{Formal Statement and Proof of Theorem~\ref{thm:bregman_characterization_informal}}
\label{appendix:bregman_characterization}

To formally state the first half of \Cref{thm:bregman_characterization_informal}, we need the following definitions. 

\begin{definition}
    Fix $n \in \mathbb{N}$, and let $L: \Delta(A) \times A^n \to \overline{\R}$ be a loss function. 
    Then $\Gamma: \Delta(A^n) \to \Delta(A)$ is a \emph{minimizer} of $L$ if for all $\mu \in \Delta(A^n), $\[\Gamma(\mu) \in \arg\min_{f \in \Delta(A)} \E_{y \sim \mu} L(f,y).\]
\end{definition}

When $\Obj$ satisfies DP, then $\Gamma(p^n) = p$. Note that \citetAppx{Abernethy2012} defines a loss function $\Obj$ to be \emph{proper for} a given property $\Gamma$ if $\Gamma$ minimizes $\Obj$; we instead begin with $\Obj$ and obtain $\Gamma$ for which $\Obj$ is proper by construction.

\begin{definition}
    Let $L: \Delta(A) \times A^n \to \overline{\R}$ be a loss function satisfying DP, and let $\Gamma$ be a minimizer of $L$. $L$ is \emph{$\Gamma$-differentiable} if for all $f \in \textrm{relint}(\Delta(A))$, and $\mu \in \Gamma^{-1}(f)$, the directional derivative \[\lim_{\epsilon \to 0} \frac{\E_{y \sim \mu}L(f + \epsilon v,y) - \E_{y \sim\mu} L(f, y)}{\epsilon}\] exists for all $v$ such that $f + \epsilon v \in \Delta(A)$ for sufficiently small $\epsilon$.
\end{definition}

We can now formally state \Cref{thm:bregman_characterization_informal}.
We split the statement into three parts.
First, holding $n$ fixed, we prove the only-if direction, showing that any loss function satisfying SP and DP must be a particular form of Bregman divergence (and that ZM further constrains this form).
Second, continuing to hold $n$ fixed, we prove the if direction, showing that all of these Bregman divergences indeed satisfy SP and DP (and, with the additional constraint, ZM).
Third, allowing $n$ to vary, we show that EDS constrains the Bregman divergences for each $n$ to be equivalent.

\begin{theorem}[\Cref{thm:bregman_characterization_informal}; fixed $n$, only-if direction]
    Fix $n \in \mathbb{N}$, and let $\Obj: \Delta(A) \times A^n \to \overline{\R}$ satisfy SP and DP. 
    Suppose that $\Obj$ is $\Gamma$-differentiable for some minimizer $\Gamma$ of $\Obj$. 
    Then there exists some closed and proper strictly convex function $B$ and subgradient $dB$, and some translation $c$ such that $\Obj$ is of the form \begin{kequation*}
        \Obj(f,y) = \nabla_{(B, dB)}(\bar p(y), f) + c(y),
    \end{kequation*}
    for all $f \in \textrm{relint}(\Delta(A)), y \in A^n$. 
    If $\Obj$ also satisfies ZM, then $c(y) = 0$ for all $y \in A^n$. 
\end{theorem}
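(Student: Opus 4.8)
The plan is to bootstrap from the representation furnished by \Cref{thm:af12} and then use SP and ZM to pin down the two remaining degrees of freedom, the summary statistic $\rho$ and the translation $c$. First I would invoke \Cref{thm:af12}: since $\Obj$ satisfies DP and is $\Gamma$-differentiable for some minimizer $\Gamma$ (the $\Gamma$-differentiability hypothesis is precisely the regularity needed to apply Theorem 11 of \citetAppx{Abernethy2012}), there exist a closed and proper strictly convex $B$, a subgradient $dB$, a summary statistic $\rho: A^n \to \Delta(A)$ with $\E_{y \sim p^n}\rho(y) = p$, and a translation $c$ such that $\Obj(f,y) = \nabla_{(B,dB)}(\rho(y), f) + c(y)$ for all $f \in \textrm{relint}(\Delta(A))$ and $y \in A^n$. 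The care here lies entirely in verifying that a $\Gamma$-differentiable $\Obj$ meets the technical conditions of the cited theorem; this is the step I expect to demand the most bookkeeping, since it requires translating between the property-elicitation language of \citetAppx{Abernethy2012} (where $\Gamma$ is a property assumed to be elicited by a proper $\Obj$) and our setting (where $\Gamma$ is constructed as a minimizer of a given $\Obj$, and DP guarantees $\Gamma(p^n)=p$).

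Next I would use SP to identify $\rho$ with the empirical distribution. The key observation is a standard fact about Bregman divergences: for fixed $y$, the map $f \mapsto \nabla_{(B,dB)}(\rho(y), f)$ is nonnegative and, by strict convexity of $B$, vanishes only at $f = \rho(y)$. Indeed, the subgradient inequality $B(p) \geq B(q) + dB(q)^T(p-q)$ gives nonnegativity, and a short midpoint argument upgrades this to strictness whenever $p \neq q$, so $\nabla_{(B,dB)}(\rho(y), f) > 0$ for $f \neq \rho(y)$. Since $c(y)$ does not depend on $f$, this shows that $\rho(y)$ is the unique minimizer of $\Obj(\cdot, y)$ over $\Delta(A)$; but SP asserts that $\bar p(y)$ is the unique minimizer, so uniqueness forces $\rho(y) = \bar p(y)$ for every $y \in A^n$ (consistent with $\E_{y \sim p^n}\rho(y) = p$, since $\bar p$ is itself unbiased). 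One subtlety to handle is that the representation is only guaranteed on $\textrm{relint}(\Delta(A))$, whereas $\bar p(y)$ may lie on the boundary of the simplex; I would close this gap by comparing minima within the relative interior and then appealing to closedness of $B$ via a limiting argument as $f \to \bar p(y)$.

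Substituting $\rho(y) = \bar p(y)$ yields $\Obj(f,y) = \nabla_{(B,dB)}(\bar p(y), f) + c(y)$, which establishes the first claim. Finally, for the ZM consequence, I would evaluate at $f = \bar p(y)$: since $\nabla_{(B,dB)}(\bar p(y), \bar p(y)) = 0$, we obtain $\Obj(\bar p(y), y) = c(y)$, and ZM (which requires $\Obj(\bar p(y), y) = 0$) immediately gives $c(y) = 0$ for all $y \in A^n$. The conceptual content is thus concentrated in the first step; the identification of $\rho$ and $c$ is clean once the Bregman representation is in hand, with the only genuine care being the relative-interior-versus-boundary point noted above.
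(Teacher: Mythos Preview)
Your overall architecture matches the paper's exactly: invoke the Abernethy--Frongillo representation under DP, then let SP pin down $\rho=\bar p$ via uniqueness of the Bregman minimizer, then let ZM force $c\equiv 0$. Where you diverge from the paper is in diagnosing \emph{where} the real work lies. You locate it in checking that $\Gamma$-differentiability delivers the hypotheses of Theorem~11; the paper treats that as routine and instead identifies the crux as obtaining \emph{strict} convexity of $B$. Abernethy--Frongillo's Theorem~11 only yields a convex $B$, and their argument for strictness uses propriety over \emph{arbitrary} distributions on $A^n$, whereas DP constrains only product distributions $p^n$. The paper patches this by observing that one may choose the basis distributions $\mu_i$ in their proof to be $b_i^n$, so that every convex combination $\hat\mu[f]$ is itself i.i.d.\ from $f$; then DP's strict inequality applies and, via a standard subgradient-to-strict-convexity lemma, gives strict convexity of $B$. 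Your ``short midpoint argument'' gestures at this but does not engage with the i.i.d.-versus-arbitrary gap, which is the substantive point.

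Conversely, your relint-versus-boundary concern (that $\bar p(y)$ may sit on $\partial\Delta(A)$ while the representation is only asserted on the relative interior) is a legitimate subtlety that the paper's proof does not explicitly address; your proposed limiting argument via closedness of $B$ is a reasonable way to close it.
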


We prove this theorem by adapting the proof of Theorem 11 of \citetAppx{Abernethy2012}.

\begin{proof}
    Taking $U = \Delta(A)$ and $\Omega = A^n$, we can apply Theorem 11
    of \citetAppx{Abernethy2012} to find that there exists some convex function $B$ and subgradient $dB$ as well as functions $\rho: A^n \to \Delta(A)$ and $c: A^n \to \R$ such that for all $f \in \textrm{relint}(\Delta(A)), y \in A^n$, \[\Obj(f,y) = \nabla_{(B,dB)}(\rho(y), f) + c(y).\]

    Moreover, $B$ is \emph{strictly} convex. This would follow immediately if reporting the true distribution uniquely minimized the expected loss over \emph{arbitrary} data distributions---allowing for arbitrary correlations between samples---but our DP axiom only requires this for i.i.d.\ data. 
    % when the data is an i.i.d. sample.
    However, it is possible to repair this problem with the following modification to their proof.
    Given a basis $\{b_i\}$ of $\Delta(A)$, they select a set of corresponding distributions $\mu_i \in \Gamma^{-1}(b_i)$; by the DP axiom, we know that $b_i^n \in \Gamma^{-1}(b_i)$, so we can pick $\mu_i = b_i^n$. 
    Then, for any linear combination $f = \sum_i \alpha_i b_i$, the distribution $\hat \mu[f] = \sum_i \alpha_i \mu_i$ samples i.i.d.\ from $f$. 
    Using this choice of distributions $\mu_i$, the remainder of their proof only requires that $\Obj$ be proper for distributions of $n$ i.i.d.\ observations. 
    In this case, our DP axiom gives a strict inequality $B(f) + dB(f)^T (f' - f) < B(f')$ for all $f \neq f'$; by Proposition D.6.1.3 of \citetAppx{Hiriart-Urruty2001}, this yields strict convexity of $B$.

    Since a Bregman divergence $\nabla_{(B,dB)}(p,q)$ of strictly convex $B$ is uniquely minimized by $p=q$, for any $y \in A^n$, $L(f,y)$ is uniquely minimized by $f = \rho(y)$; thus, SP constrains that $\rho(y) = \bar p(y)$ for all $y$.

    Finally, ZM implies that for all $y \in A^n$, \[L(\bar p(y), y) = \nabla_{(B,dB)}(\bar p(y), \bar p(y)) + c(y) = 0 + c(y) = 0,\] so $c(y) = 0$ for all $y$. Thus $L(f,y) = \nabla_{(B,dB)}(\bar p(y), f)$ for all $y \in A^n$, $f \in \textrm{relint}(\Delta(A))$.
\end{proof}

\begin{theorem}[\Cref{thm:bregman_characterization_informal}; fixed $n$, if direction] 
If $\Obj(f,y) = \nabla_{(B, dB)}(\bar p(y), f) +c(y)$ for some closed and proper strictly convex function $B: C \to \R$ such that $\Delta(A) \subseteq C$, subgradient $\smash{dB: C\times C \to \overline{\R}^d}$ of $B$, and translaton $c: A^n \to \R$, then $\Obj$ satisfies SP and DP. 
If $c(y) = 0$ for all $y$, then $\Obj$ also satisfies ZM.
\end{theorem}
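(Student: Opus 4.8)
The plan is to verify each axiom directly, leaning on two standard properties of a Bregman divergence $\nabla_{(B,dB)}$ generated by a strictly convex $B$: first, that it is nonnegative and vanishes exactly on the diagonal, so that $\nabla_{(B,dB)}(p,q) > 0$ whenever $p \neq q$; and second, that it is affine in its first argument, so that expectations pass through it cleanly. The one additional ingredient is the unbiasedness of the empirical distribution, $\E_{y \sim p^n}[\bar p(y)] = p$, which holds coordinatewise since each independent draw contributes probability $p_a$ to action $a$.

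First I would dispatch SP, ZM, and Ex, as these are immediate. Evaluating the loss at a perfect prediction gives $\Obj(\bar p(y), y) = \nabla_{(B,dB)}(\bar p(y), \bar p(y)) + c(y) = c(y)$ by reflexivity of the divergence, while for any other $f$ the gap $\Obj(f,y) - \Obj(\bar p(y), y) = \nabla_{(B,dB)}(\bar p(y), f)$ is strictly positive; this is exactly SP. When $c \equiv 0$ the same computation gives $\Obj(\bar p(y), y) = 0$, establishing ZM, and moreover $\Obj(f, y) = \nabla_{(B,dB)}(\bar p(y), f)$ then depends on $y$ only through $\bar p(y)$. Since $\bar p(\pi(y)) = \bar p(y)$ for every permutation $\pi$ (the empirical distribution depends only on counts), Ex follows.

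The substantive step is DP. Here I would use the affine-in-the-first-argument structure to write the Jensen-gap decomposition $\E_{y \sim p^n}[\nabla_{(B,dB)}(\bar p(y), f)] = \nabla_{(B,dB)}(\E_{y}[\bar p(y)], f) + \big(\E_{y}[B(\bar p(y))] - B(\E_{y}[\bar p(y)])\big)$, in which the parenthesized term is independent of $f$. Substituting $\E_y[\bar p(y)] = p$ and subtracting the analogous expression for the prediction $p$, both the $f$-independent gap and the translation $c(y)$ cancel, leaving $\E_{y}\Obj(f,y) - \E_{y}\Obj(p,y) = \nabla_{(B,dB)}(p, f)$, which is strictly positive for $f \neq p$. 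This is precisely DP, and it transparently explains why the translation $c$ is irrelevant to both propriety axioms.

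The main obstacle is not the algebra but justifying the strict positivity of $\nabla_{(B,dB)}(p,q)$ off the diagonal when $dB$ is only a subgradient of a merely strictly convex $B$, possibly taking infinite values on $\partial\Delta(A)$. I would invoke the standard convex-analysis fact that a strictly convex function lies strictly above each of its supporting hyperplanes except at the point of support---the same ingredient (cf.\ the cited Hiriart-Urruty) used in Part~I---so that the subgradient inequality defining $\nabla_{(B,dB)}$ is strict whenever $p \neq q$. Finiteness of all expectations is free, since $A^n$ is a finite set and $B$ is real-valued on $\Delta(A)$, so no integrability argument is needed.
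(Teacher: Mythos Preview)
Your argument is correct and matches the paper's: SP, ZM, and Ex are handled identically, and your Jensen-gap decomposition for DP is exactly the paper's line-by-line computation of $\E_{y}\Obj(f,y)$, just rearranged so that the $f$-independent pieces are named rather than merely displayed. One minor slip in your preamble: the Bregman divergence is \emph{not} affine in its first argument (the $B(\bar p(y))$ term is strictly convex), but your actual calculation correctly isolates precisely that term as the Jensen gap $\E_y[B(\bar p(y))] - B(p)$, so the mathematics is sound.
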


All three of these axioms follow from a basic property of Bregman divergences of all strictly convex functions, which is that $\nabla_{(B, dB)}(p, q) \ge 0$, with equality if and only if $p = q$.

\begin{proof}
    \textbf{SP.} For any Bregman divergence with strictly convex $B$ and fixed $\bar p(y)$, $\nabla_{(B, dB)}(\bar p(y), f)$ is uniquely minimized by $f =\bar p(y)$. Since $y$ is fixed, the translation by $c(y)$ does not affect the minimizer.
    
    \textbf{DP.} This was shown by \citetAppx{Banerjee2005} for $f, p \in \textrm{relint}(\Delta(A))$; we show it generally below. Let $p \in \Delta(A)$; notice that $\E_{y \sim p^n} \bar p(y) = p$. Then, for all $f \in \Delta(A)$,\begin{align*}
        &\E_{y \sim p^n} \Obj(f,y)\\
        &= \E_{y \sim p^n}[\nabla_{(B,dB)}(\bar p(y), f) + c(y)]\\
        &=\E_{y \sim p^n} c(y) + \E_{y \sim p^n} B(\bar p(y))\\
        &\quad\quad - B(f) - \E_{y \sim p^n} (\bar p(y) - f)^T dB(f)\\
        &= \E_{y \sim p^n} c(y) + \E_{y \sim p^n} B(\bar p(y))  \\
        &\quad\quad- B(p) + B(p) - B(f) - (p - f)^T dB(f)\\
        &=\E_{y \sim p^n} c(y) + \E_{y \sim p^n} B(\bar p(y))- B(p) + \nabla_{(B,dB)}(p, f) .
    \end{align*}
    The first three terms do not depend on $f$, 
    and the final term is uniquely minimized by $f = p$.

    \textbf{ZM.} For any Bregman divergence, $\nabla_{(B,dB)}(\bar p(y), \bar p(y)) = 0$, so if $c(y) = 0$, \[L(\bar p(y), y) = \nabla_{(B,dB)}(\bar p(y), \bar p(y)) + 0 = 0.\]
\end{proof}

\begin{lemma}\label{lemma:equal_bd}
Let $p \in \Delta(A)$ be a distribution, 
and let $B_1, B_2: \Delta(A) \to \R$ be two closed and proper strictly convex functions with subgradients $dB_1$ and $dB_2$, respectively.
Then, $\nabla_{(B_1, dB_1)}(p, f) = \nabla_{(B_2, dB_2)}(p, f)$ for all $f \in \Delta(A)$ if and only if $B_1(f) - B_2(f) = c_1^T f + c_2$ for all $f \in \Delta(A)$ and some constants $c_1 \in \R^d, c_2 \in \R$.
In particular, this implies that $\nabla_{(B_1, dB_1)}(p', f) = \nabla_{(B_2, dB_2)}(p', f)$ for all $p', f \in \Delta(A)$.
\end{lemma}

\begin{proof}
    The if direction is straightforward: we have
    \begin{align*}
        &\nabla_{(B_1, dB_1)}(p, f) \\
        = &B_1(p) - B_1(f) - dB_1(f)^T (p - f) \\
        = &\left(B_2(p) + c_1^T p + c_2\right) - \left(B_2(f) + c_1^T f + c_2\right) \\
          &- \left(dB_2(f)^T + c_1\right)^T (p - f) \\
        = &B_2(p) - B_2(f) - dB_2(f)^T (p - f) \\
          &+ \underbrace{c_1^T p + c_2 - c_1^T f - c_2 - c_1 (p - f)}_{0} \\
        = &\nabla_{(B_2, db_2)}(p, f). 
    \end{align*}

    For the only-if direction, let $D(f) = B_1(f) - B_2(f)$ be the difference between the two functions, 
    and let $dD(f) = dB_1(f) - dB_2(f)$ be the difference between their subgradients.
    We aim to show that $D$ is an affine function $D(f) = c_1 f + c_2$ for some constants $c_1, c_2$.

    Expanding the definition of Bregman divergences gives and rearranging terms gives
    \begin{align*}
          &B_1(p) - B_1(f) - dB_1(f)^T (p - f) \\
        = &B_2(p) - B_2(f) - dB_2(f)^T (p - f);
    \end{align*}
    rearranging terms, we have
    \[
        dD(f)^T (f - p) - D(f) = -D(p).
    \]
    This is a partial differential equation for $D(f)$.
    To show that $D(f)$ is affine, we will solve this equation along an arbitrary ray $f = p + tv$, 
    where $t \in \R$ and $v \in \R^d$, with $\sum_i v_i = 0$.
    Let $R(t) = D(p + tv)$.
    Then, $R'(t) = v^T dD(p + tv)$, and the differential equation becomes
    \[
        tR'(t) - R(t) = -D(p).
    \]
    This is now a linear ordinary differential equation with the solution
    \[
        R(t) = D(p) + Ct
    \]
    for some $C \in \R$.
    Thus, $D(f)$ is affine along any arbitrary ray.
    This implies that it is an affine function, completing the proof. 
\end{proof}

\begin{theorem}[\Cref{thm:bregman_characterization_informal}, varying $n$]
Let $\mathcal{B}$ be a family of closed and proper strictly convex functions with subgradients $d\mathcal{B}$, 
and let $\Obj(f, y) = \nabla_{(\mathcal{B}_n, d\mathcal{B}_n)}(\bar p(y), f)$.
Then, $\Obj(f, y)$ satisfies EDS if and only if there is some convex function $B$ and subgradient $dB$ such that $\nabla_{(\mathcal{B}_n, d\mathcal{B}_n)} = \nabla_{(B, dB)}$ for all $n$.
\end{theorem}

\begin{proof}
The if direction is immediate. 
For the only if direction, we will show that the theorem statement holds with $B = \mathcal{B}_1$ and $dB = d\mathcal{B}_1$.
Let $y^{(1)}$ be a dataset consisting of a single action, and let $y^{(n)}$ be a dataset consisting of $n$ copies of the same action.
Because $\bar p(y^{(1)}) = \bar p(y^{(n)})$, EDS requires that, for all $f$,
\begin{align*}
    \Obj(f, y^{(1)}) &= \Obj(f, y^{(n)}) \\
    \implies 
    \nabla_{(\mathcal{B}_1, d\mathcal{B}_1)}(f, \bar p(y^{(1)}))
    &= \nabla_{(\mathcal{B}_n, d\mathcal{B}_n)}(f, \bar p(y^{(1)})).
\end{align*}
Applying \Cref{lemma:equal_bd} completes the proof.
\end{proof}

\section{Proof of Theorem~\ref{thm:dbbd}}\label{appendix:dbbd_proof}
We will now prove that every diagonal bounded Bregman divergences satisfies all of our axioms.

First, recall the definition of a DBBD.

\dbbddefn*

Note that while $b'$ is not defined at 0, since $b$ is continuously differentiable and convex, $b'$ is monotonic and we can define $b'$ at the endpoints as the continuous extension, which we constrain to be finite. For the remainder of this argument, we simplify $\nabla_{(B_b, \nabla B_b)}$ to $\nabla_{B_b}$.

We are now ready to prove \Cref{thm:dbbd}.

\dbbd*

\begin{prf} 
    Let $\Obj$ be a DBBD, where $\Obj(f,y) = \nabla_{B_b}(\bar p(y), f)$ for some $b$. 
    We establish that every such loss function satisfies each of our axioms in turn.
   
    \textbf{SPA.} We use the convexity of $b$ to prove that $\Obj$ satisfies SPA.
    Here, note that $\Obj$ is bounded on $[0,1]$ due to continuity on $[0,1]$ and the bounded first derivative of $b$.
    
    Let $f, g \in \Delta(A)$, $y \in A^n$. Denote $\bar p = \bar p(y)$, and suppose $f \succ_{\bar p} g$. 
    
    As $b$ is convex and differentiable on $[0,1]$, $b'$ is increasing. 
    
    Let $1 \leq a \leq d$, and suppose that $\bar p_a \leq f_a \leq g_a$. Then $\bar p_a - f_a \leq 0$, and $b'(f_a) \leq b'(g_a)$.
    
    Now suppose that $\bar p_a \geq f_a \geq g_a$. Then $\bar p_a - f_a \geq 0$, and $b'(f_a) \geq b'(g_a)$.
    
    In either case, $(\bar p_a - f_a) b'(f_a) \geq (\bar p_a - f_a) b'(g_a)$. Then \begin{align*}
        &\Obj(g,y) - \Obj(f,y)\\
        &= \nabla_{B_b} (\bar p, g) - \nabla_{B_b} (\bar p, f)\\
        &= B_b(\bar p) - B_b(g) - (\bar p - g)^T \nabla B_b(g)\\
        &\quad - \left(B_b(\bar p) - B_b(f) - (\bar p - f)^T \nabla B_b (f)\right)\\
        &= \sum_{a=1}^d b(f_a) - b(g_a) + (\bar p_a - f_a)b'(f_a) - (\bar p_a - g_a) b'(g_a)\\
        &\geq \sum_{a=1}^d b(f_a) - b(g_a) + (\bar p_a - f_a) b'(g_a) - (\bar p_a - g_a) b'(g_a)\\
        &= \sum_{a=1}^d \nabla_b(g_a, f_a)
    \end{align*}
    For all $a$, $\nabla_b(g_a, f_a) \geq 0$. For some $\tilde a$, $g_{\tilde a} \neq f_{\tilde a}$, so $\nabla_b(g_{\tilde a}, f_{\tilde a}) > 0$, and $\Obj(g,y) - \Obj(f,y) > 0$.
    
    \textbf{DPA.} Let $p\in \Delta(A)$ and notice that $\E_{y \sim p^n} \bar p(y) = p$. 
    
    For any $f, g \in \Delta(A)$, $\nabla_{B_b}(\bar p(y), f) - \nabla_{B_b}(\bar p(y), g)$ is linear in $\bar p(y)$ as the term $B_b(\bar p(y))$ in each cancels. Thus \begin{align*}
        &\E_{y\sim p^n} \Obj(f,y) - \E_{y \sim p^n} \Obj(f,y)\\
        &= \E_{y \sim p^n} \left[\nabla_{B_b}(\bar p(y), f) - \nabla_{B_b}(\bar p(y), g)\right]\\
        &= \nabla_{B_b}(p, f) - \nabla_{B_b}(p, g).
    \end{align*} Thus if $f \succ_p g$, we can repeat the argument that $\Obj$ satisfies SPA with $\bar p$ replaced by $p$, and conclude that $\E_{y \sim p^n} \Obj(f,y) - \E_{y \sim p^n} \Obj(g,y) > 0$.

    \textbf{Ex.} This follows from \Cref{thm:bregman_characterization_informal}.

    \textbf{CPR.} This argument follows a similar structure to the proof of SPA.
    
    Let $n \in \mathbb{N}$, and $y, y' \in  A^n$. Let $f \in \Delta(A)$. Denote $\bar p := \bar p(y)$, $\bar p' := \bar p(y')$, with $\bar p \succ_f \bar p'$.
    
    As $b$ is convex and differentiable on $[0,1]$, $b'$ is increasing.
    
    Let $1 \leq a \leq d$, and suppose that $\bar p_a' \leq \bar p_a \leq f_a$. Then we know that $\bar p_a' - \bar p_a < 0$, and $b'(f_a) \geq b'(\bar p_a)$. 
    
    On the other hand, if $\bar p_a' \geq \bar p_a \geq f_a$, then $\bar p_a' - \bar p_a > 0$, but $b'(f_a) \leq b'(\bar p_a)$. 
    
    Thus in both cases, $(\bar p_a' - \bar p_a) b'(f_a) \leq (\bar p_a' - \bar p_a) b'(\bar p_a)$. Then \begin{align*}
    &\Obj(f,y') - \Obj(f,y)\\
    &= \nabla_{B_b}(\bar p', f) - \nabla_{B_b}(\bar p, f)\\
    &= B_b(\bar p') - B_b(f) - (\bar p' - f)^T \nabla B_b(f)\\
    &\quad- \left(B_b(\bar p) - B_b(f) - (\bar p - f)^T \nabla B_b(f)\right)\\
    &= B_b(\bar p') - B_b(\bar p) -(\bar p' - \bar p)^T\nabla B_b(f)\\
    &= \sum_{a=1}^d b(\bar p_a') - b(\bar p_a) - (\bar p_a' - \bar p_a) b'(f_a)\\
    &\geq \sum_{a=1}^d b(\bar p_a') - b(\bar p_a) - (\bar p_a' - \bar p_a) b'(\bar p_a)\\
    &= \sum_{a=1}^d \nabla_d(\bar p_a', \bar p_a).
    \end{align*}
    
    For all $a$, $\nabla_d(\bar p_a', \bar p_a) \geq 0$, and for some $\tilde a$, $\bar p_{\tilde a} \neq \bar p_{\tilde a}'$, so $\nabla_d(\bar p_{\tilde a}', \bar p_{\tilde a}) > 0$. Thus $\Obj(f,y') - \Obj(f,y) > 0.$
    
    \textbf{ZM.} This also follows from \Cref{thm:bregman_characterization_informal}.
    \end{prf}

\section{Proof of \Cref{thm:scoring-rules-negative-result}}\label{appendix:sr_negative}

For convenience, we restate the proposition here.

\srnegative*

\begin{prf}
    We begin by showing that a scoring rule cannot satisfy both SPA and ZM.
    Suppose $S$ is such a scoring rule; we will derive a contradiction.
    First, for all $a \in A$, if $f \neq e_a$, SPA requires that $S(f, a) = \Obj_S(f, \{a\}) > \Obj_S(e_a, \{a\}) = S(e_a, a)$.
    Second, for all $a \in A$, ZM requires that $S(e_a, a) = \Obj_S(e_a, \{a\}) = 0$.
    Then, for any dataset $y$ such that $\bar p(y)$ has support on at least 2 elements, we have 
    \begin{align*}
        0
        &= \Obj_S(\bar p(y), y)\\
        &= \sum_{a \in A} \bar p(y)_a S(\bar p(y), a)\\
        &> \sum_{a \in A} \bar p(y)_a S(e_a, a)\\
        &= 0,
    \end{align*}
    a contradiction.

    Now, we show that an arbitrary scoring rule does not satisfy CPR. Let $S$ be a scoring rule. Suppose $y$ is a set of observations in which not all players select the same action. Define $a^*$ to be an action in $A$ with the best score, i.e., \[a^* \in \arg\min_{a\in A}S(\bar p(y), a).\] 
    
    Let $y'$ be another set of observations of the same size where all players play $a^*$. Then \begin{align*}
        L_S(\bar p(y), y) &= \sum_{a \in A} \bar p(y)_a S(\bar p(y), a)\\
        &\geq S(\bar p(y), a^*)\sum_{a \in A} \bar p(y)_a \\
        &= S(\bar p(y), a^*)\\
        &= L_S(\bar p(y), y'),
    \end{align*} but $\bar p(y) \succ_{\bar p(y)} \bar p(y')$.
\end{prf}

\section{Extension to Multiple Settings}\label{appendix:multiplegames}

In the main text, we made the simplifying assumption that a behavioral model will be evaluated on a single setting.
In general, though, models may predict distributions over outcomes for every instance in a set of strategic settings; the ideal model should perform well on every setting.
Then, given data for a subset of the possible settings, a modeller's goal is to compute an \emph{aggregate} loss to gauge the overall performance of a model.
In this section, we discuss how to extend our axioms to handle data from multiple settings.

With multiple settings, there are broadly two types of data.
The first is \emph{panel} data, where each participant is observed acting in each settings, allowing the modeller to observe correlations between actions in different settings.
Evaluating models on panel data is conceptually simple.
One can simply think of modelling the \emph{joint} setting, in which each observation consists of a tuple of actions (one for each setting) and a model predicts a joint distribution over these tuples.
The exponential size of this joint observation space may lead to practical issues; we leave these considerations for future work.
Instead, in the remainder of this section, we focus on \emph{pooled} data, where it is not possible to link observations across settings; this includes datasets where participant identities are removed, or where a distinct set of participants are observed in each setting.

We first extend our notation. There is a set $S$ of settings; each setting $s \in S$ has a finite set of outcomes $A(s)$ and a mapping to a distribution $p(s)$ over $A(s)$. The researcher wishes to evaluate a model $f$, which is a function producing a prediction of $p(s)$ for each $s \in S$. The researcher has data $y^i$ of observations in setting $s_i$ for $m$ settings. The loss $\Agg$ then takes a collection of $m$ inputs of the form $(s_i, y^i, f(s_i))$ and produces a non-negative real number describing the model's overall performance. The loss should handle an arbitrary number settings.

We begin with an extension of DP, which states that predicting the true distribution for \emph{every} setting produces a strictly optimal expected aggregate loss.
\begin{axiom}
    For all $m \geq 1, \{s_i\}_{i=1}^m$, and any model $f$ and dataset sizes $\{n^i\}_{i=1}^m$, if \[f(s_i) \neq p(s_i)\] for some $i$, then \begin{align*}
        &\E_{\substack{y^i \sim p(s_i)^{n_i},\\ 1\leq i \leq m}} \Agg(\{(s_i,y^i, f(s_i))\}_{i=1}^m)\\
        &> \E_{\substack{y^i \sim p(s_i)^{n_i}, \\ 1 \leq i \leq m}} \Agg(\{(s_i,y^i, p(s_i))\}_{i=1}^m).
    \end{align*}
\end{axiom}
To define DPA in this setting, we need an additional definition.
\begin{definition}[Weak Pareto improvement]
    Let $p, q, r \in \Delta(A)$. We say that $q$ is a \emph{weak Pareto improvement} over $p$ with respect to $r$, denoted by $q \succsim_r p$, if for all $a \in A$, f either $p_a \leq q_a \leq r_a$ or $p_a \geq q_a \geq r_a$.
\end{definition} Note that this is the same definition as for Pareto improvement except that we do not require a strict improvement in any dimension.
The extension of DPA reflects the principle that if the model is to be evaluated on a given \emph{collection} of settings, the aggregate loss function should still be incentive-compatible.  We thus require that given an unambiguously better model -- i.e., weak Pareto improvements in every setting and a (``strict'') Pareto improvement in at least one setting -- the expected aggregate loss decreases.
\begin{axiom}
    For all $m \geq 1, \{s_i\}_{i=1}^m$, any two models $f$ and $g$, and any dataset sizes $\{n^i\}_{i=1}^m$, if \[f(s_i) \succsim_{p(s_i)} g(s_i)\] for all $i$ and \[f(s_j) \succ_{p(s_j)} g(s_j)\] for some $j$, then \begin{align*}
        &\E_{\substack{y^i \sim p(s_i)^{n_i},\\ 1\leq i \leq m}}\Agg(\{(s_i,y^i, f(s_i))\}_{i=1}^m)\\
        &<\E_{\substack{y^i \sim p(s_i)^{n_i},\\ 1\leq i\leq m}} \Agg(\{(s_i,y^i, g(s_i))\}_{i=1}^m).
    \end{align*}
\end{axiom}
SP now requires that at evaluation time, a perfect prediction on \emph{every} setting achieves a strictly optimal aggregate loss.
\begin{axiom}
    For all $m \geq 1$, $\{s_i\}_{i=1}^m$, datasets $y^i$ in each $s_i$, and any model $f$, if \[f(s_i) \neq \bar p(y^i),\] for some $i$, then \[\Agg(\{(s_i, y^i, f(s_i))\}_{i=1}^m) > \Agg(\{(s_i, y^i, \bar p(y^i))\}_{i=1}^m).\]
\end{axiom}
Similarly, SPA now states that if the model is being evaluated on a \emph{collection} of settings, a model which is an unambiguously better match to the realized data should get a lower 
aggregate loss.
\begin{axiom} For all $m \geq 1$, settings $\{s_i\}_{i=1}^m$, datasets $y^i$ in each $s_i$, and models $f$ and $g$, if \[f(s_i) \succsim_{\bar p(y^i)} g(s_i)\] for all $i$ and \[f(s_j) \succ_{\bar p(y^j)} g(s_j)\] for some $j$, then \[\Agg(\{(s_i, y^i, f(s_i))\}_{i=1}^m) < \Agg(\{(s_i, y^i, g(s_i))\}_{i=1}^m).\] 
\end{axiom}
Next, if \emph{each} setting's dataset is rearranged, the loss remains the same.
\begin{axiom}
    For all $m \geq 1$, settings $\{s_i\}_{i=1}^m$, datasets $y^i$ in each $s_i$, models $f$, and permutations $\pi_i \in \Pi(\{1, ..., n(y^i)\})$ for $i \in [1,..., m]$, then \[\Agg(\{(s_i, y^i, f(s_i))\}_{i=1}^m) = \Agg(\{(s_i, \pi_i(y^i), f(s_i))\}_{i=1}^m).\]
\end{axiom}
CPR extends to require that if a \emph{collection} of datasets matches a model more closely than another collection of datasets, then the loss of the model on the first collection should be lower. 
\begin{axiom} For all $m \geq 1$, settings $\{s_i\}_{i=1}^m$, datasets $y^i$ and $z_i$ of the same size $n_i$ in each $s_i$, and models $f$,
if \[\bar p(y^i) \succsim_{f(s_i)} \bar p(z^i)\] for all $i$ and \[\bar p(y^j) \succ_{f(s_j)} \bar p(z^j)\] for some $j$, then $$\Agg(\{(s_i, y^i, f(s_i))\}_{i=1}^m) < \Agg(\{(s_i,z^i, f(s_i))\}_{i=1}^m).$$
\end{axiom}
Finally, ZM becomes a requirement that a model perfectly matching the realized data distribution for \emph{every} setting yields a loss of zero. 
\begin{axiom}
For all $m \geq 1$, $\{s_i\}_{i=1}^m$, and datasets $y^i$, \[\Agg(\{(s_i, y^i, \bar p(y^i))\}_{i=1}^m) = 0.\]
\end{axiom}

In practice, it is typical for modellers to compute the loss of the model's prediction on the data for each of the settings, and then take the (possibly weighted) average of the losses.
This is a sound choice: it is easy to show that if a loss function $\Obj$ satisfies the single-setting axioms, then the aggregate loss $\Agg$ that combines the setting-wise losses by taking any such weighted average will satisfy each of the axioms, as long as the weights on each setting are  strictly positive.
An interesting avenue for future work is to develop axioms further restricting unreasonable methods of aggregating losses.

\section{Examples on Real Data}
\label{appendix:real_data}
In this section, we demonstrate that the choice of loss function is of practical importance on real behavioral game theory data, 
showing that different losses imply different preferences over models 
and that several of the axiom violations shown in Section~\ref{sec:revisitexisting} appear in practice.
To do so, we use experimental data on two ``traveler's dilemma'' games from \citet{Goeree2001} to evaluate three models from \citet{Wright2017}.
In particular, we test Nash equilibria (which is well-defined on these games, as both have a single pure Nash equilibrium); level-k (with 9.3\% level-0 players playing the uniform random strategy, 53.8\% level-1 players, and 36.9\% level-2 players); and quantal cognitive hierarchy (with a Poisson(1.03) distribution of levels and precision 0.209).
We also compare to the loss of a perfect prediction $\bar p(y)$, which exactly matches the empirical distribution.

The results are shown in Figure~\ref{tab:real_data}.
There are several things to note about these results.
First, on the ``high penalty'' game (Figure~\ref{tab:real_data}a), error rate gives Nash a lower loss than level-k or QCH; cross-entropy, KL divergence, and NLL give Nash the highest loss; and MAE, Brier, and squared L2 rank Nash in the middle, showing that the various` losses imply different preferences over the models.
Second, both Nash and level-k get a lower error rate than a perfect prediction, violating SP and SPA.
Third, even the perfect prediction gets a non-zero error rate, cross-entropy, Brier score, and NLL, violating ZM in each case.
These highlight the potential issues in interpreting the losses: for instance, without seeing the loss of a perfect prediction, one might guess from the Brier scores that all three models have much room to improve.
NLL is especially difficult in this regard, as the irreducible error is proportional to the number of observations; we had to plot it on a separate y-axis to avoid distorting the plot.

Lastly, comparing losses on the two games, almost every combination of loss and prediction shows a higher loss on the ``high penalty'' than on the ``low penalty'' game. 
This makes sense, as all three models put substantial mass on the left-most action in the game, which is played by 76\% of participants in the high penalty game but only 8\% in the low penalty game.
The only exception is Nash, which receives an infinite loss on both games in terms of NLL, cross-entropy, and KL divergence.
Technically, this is not a violation of CPR, as both games have several actions that are played once, but never played in the other.
Still, we think that these cases are notable, as they demonstrate the broader problem of NLL, cross-entropy, and KL producing infinite losses, and we believe it is nonetheless difficult to argue that Nash is an equally good prediction in the two games.

\begin{figure*}[t]
\begin{center}
\begin{tabular}{ll}
    \includegraphics{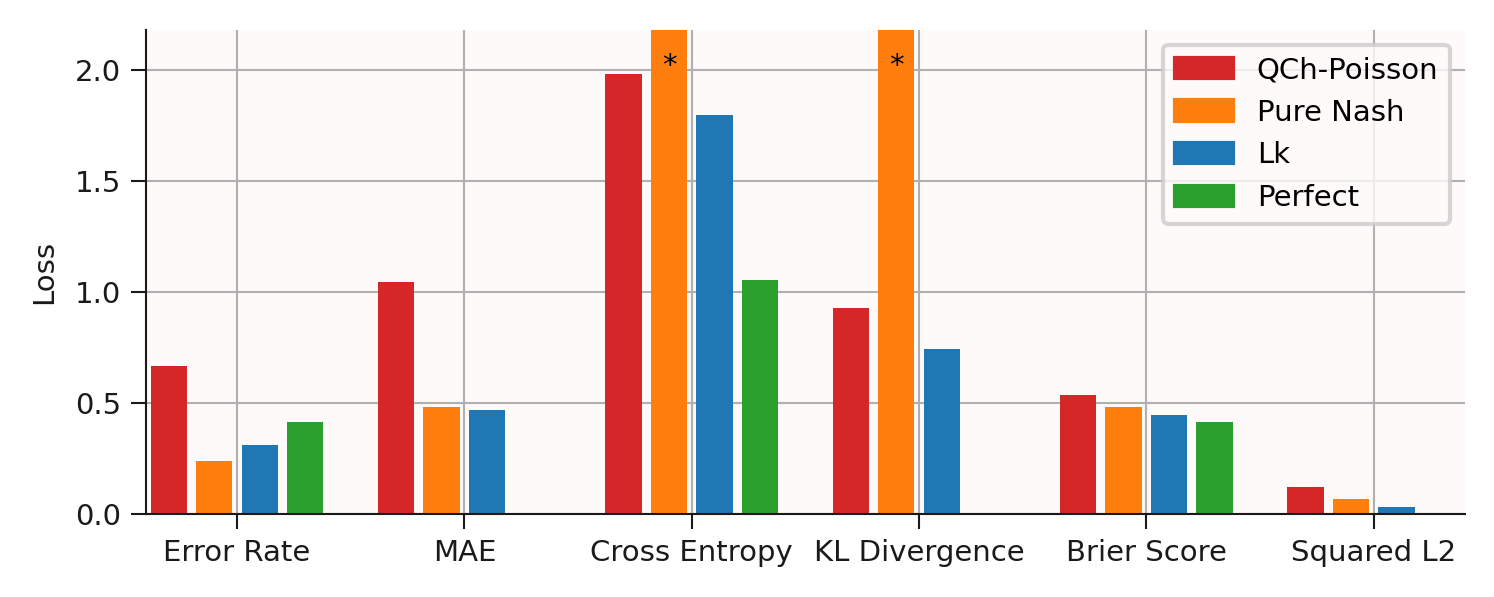} &
    \includegraphics{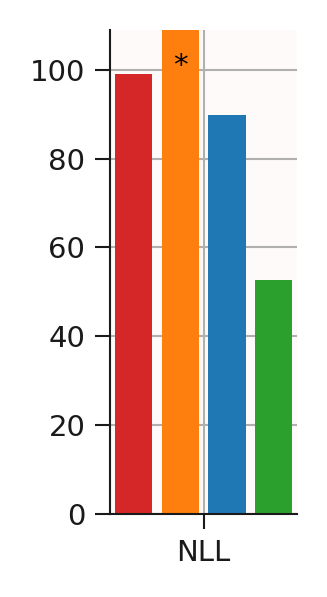} \\
    \multicolumn{2}{c}{(a) Traveler's Dilemma (High Penalty)} \\ 
    \\
    \includegraphics{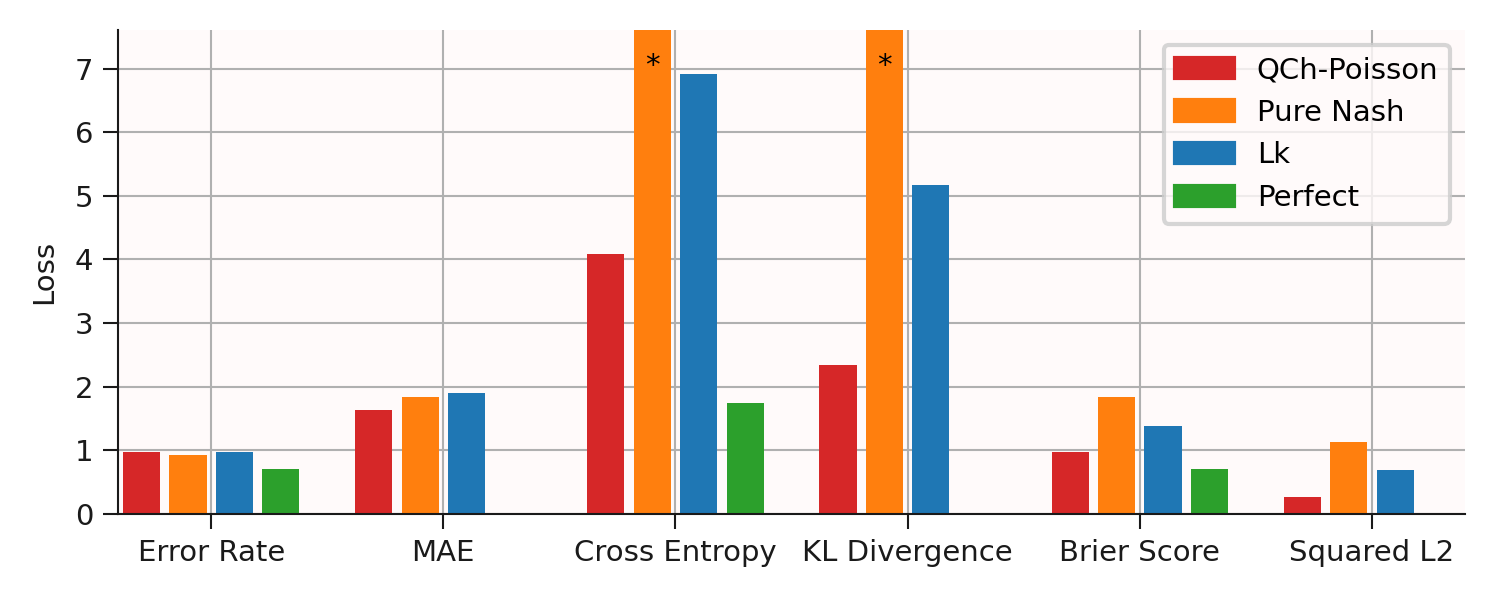} &
    \includegraphics{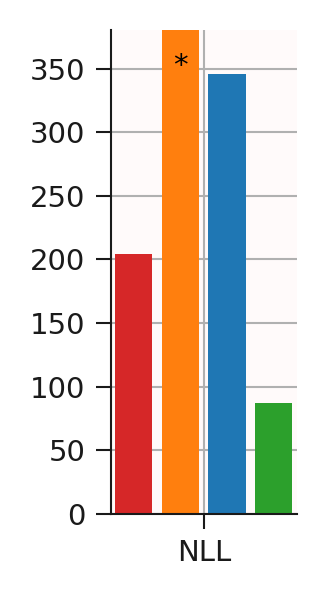} \\
    \multicolumn{2}{c}{(b) Traveler's Dilemma (Low Penalty)} \\ 
\end{tabular}
\caption{The losses of four predictions on two traveler's dilemma games~\cite{Goeree2001}.}
\label{tab:real_data}
\end{center}
\end{figure*}

\bibliographystyleAppx{plainnat}
\bibliographyAppx{references}

\end{document}